\pdfoutput=1
\documentclass{article}

\usepackage{arxiv}
\usepackage[numbers]{natbib}

\usepackage[utf8]{inputenc} 
\usepackage[T1]{fontenc}    
\usepackage{hyperref}       
\usepackage{url}            
\usepackage{booktabs}       
\usepackage{amsfonts}       
\usepackage{nicefrac}       
\usepackage{microtype}      
\usepackage{lipsum}
\usepackage{graphicx}
\usepackage{pgfplots}
\usepackage{diagbox}
\usepackage{amsmath}
\usepackage[ruled,vlined]{algorithm2e}
\graphicspath{ {./images/} }

\usepackage{booktabs} 
\usepackage{tikz} 
\usepackage{amsthm} 
\usepackage{float}
\usepackage{amsfonts}
\usepackage{enumitem}
\usepackage{amssymb}
\usepgfplotslibrary{external} 
\usepackage{filecontents}

\usepackage{subcaption}
\pgfplotscreateplotcyclelist{mycolorlist}{%
    {cyan, mark=o},%
    {magenta, mark=square},%
    {green!60!black, mark=triangle},%
    {purple, mark=diamond},%
    {orange, mark=star}%
}

\usepackage{color}
\newtheorem{proposition}{Proposition}

\newtheorem{lemma}{Lemma}

\newtheorem{theorem}{Theorem}

\newtheorem{proof sketch}{Proof Sketch}
\newtheorem{assumption}{Assumption}

\title{Fairness Auditing: Lower Bounds on Company Manipulation}

\author{
 Rachit Verma \\
  Indian Institute of Technology Gandhinagar\\
   \And
 Padala Manisha  \\
  Indian Institute of Technology Gandhinagar\\
  \And
 Sujit Gujar \\
  International Institute of Information Technology Hyderabad\\
}

\begin{document}
\maketitle
\begin{abstract}
Fairness audits are increasingly mandated in high-stakes applications such as hiring, lending, and automated decision-making. Recent work has established fundamental impossibility results for black-box fairness auditing, showing that sufficiently expressive models can evade any auditing strategy. We complement these results by quantifying the extent of unavoidable post-audit manipulation under finite audit resources.
We formulate fairness auditing as a min-max optimization between a computationally unbounded company and a budget-constrained auditor. We study two auditing regimes: (i) a budgeted auditor that certifies fairness using a fixed-size audit set, and (ii) a budgeted $\alpha$-tolerant auditor that additionally requires the audit set to estimate the fairness of the certified model within an $\alpha$ approximation. For both settings, we derive explicit lower bounds on the worst-case post-audit demographic parity deviation as functions of the audit budget, group imbalance, and fairness tolerance.
Finally, we empirically illustrate these theoretical limits using simple audit-set construction heuristics with linear and neural network classifiers. Our results demonstrate that increasing audit resources reduces, but does not eliminate, the scope for post-audit manipulation, highlighting fundamental limitations of finite-budget fairness certification. 
\end{abstract}



\section{Introduction} Machine learning (ML) models are increasingly deployed in high-stakes applications such as hiring, lending, and criminal justice, where concerns about algorithmic fairness have motivated regulations requiring bias assessments before deployment \cite{angwin2016machine,fabris2025fairness,barocas16,nyc_local_law_144,eu_ai_act_2024}. Consequently, fairness auditing has emerged as an important mechanism for certifying that deployed models satisfy prescribed fairness criteria.

A fundamental challenge, however, is that fairness audits are inherently resource constrained. Auditors typically certify a model using only a limited number of queries or samples, while the deployed model itself remains inaccessible. This creates opportunities for a strategic company to present a model that passes the audit while deploying a different model that agrees with the audited model only on the inspected samples (Figure~\ref{fig:classifier_deviation}).

Recent work has formalized this setting in the black-box auditing framework \cite{yan2022active,Godinot_2024,bourree2025robust}. In particular, \cite{Godinot_2024} proves that for sufficiently expressive hypothesis classes, no black-box auditing strategy can outperform uniform random sampling. These results establish that strategic manipulation is fundamentally unavoidable. However, they leave open a complementary quantitative question:

\begin{quote}
\emph{Given a finite audit budget, how much post-audit unfairness can a strategic company always guarantee?}
\end{quote}

\begin{figure*}[!t]
    \centering
    \begin{minipage}[b]{0.30\textwidth}
        \centering
        \scalebox{0.70}{%
            \begin{tikzpicture}

    \draw[->] (-0.5,0) -- (5,0) node[right] {Feature $1$};
    \draw[->] (0,-0.5) -- (0,5) node[above] {Feature $2$};

    \draw[thick] (1,-0.5) -- (4,5) node[right] {\textbf{$h$}};
    \draw[dashed] (0, 0) -- (5.2,5) node[right] {\textbf{$h^\prime$}};
    \draw[dashed] (0.2,2) -- (6,4) node[right] {\textbf{$h^{''}$}};

    \foreach \x/\y in {3/4, 2.5/3.5, 2.7/3.2, 2.3/4.2} {
        \filldraw[red] (\x,\y) circle(4pt);
    }

    \draw[red, thick] (4.8,4) circle(4pt);

    \draw[red, thick] (3.8,4) circle(4pt);

    \foreach \x/\y in {2.3/1, 2.5/1.2, 2.8/1.3, 3.1/1.1} {
        \draw[blue] (\x,\y) node {\large$\triangle$};
    }

    \node[blue] at (0.8,1.3) {\large$\blacktriangle$};
    
    \node[blue] at (1.7,1.2) {\large$\blacktriangle$};

    \draw[yellow, line width=1mm](0.6,1.12) -- (1,1.12) -- (0.8,1.57) -- cycle;

    \draw[yellow, line width=1mm] (4.8,4) circle(6pt);

\end{tikzpicture}%
        }%
    \end{minipage}
    \hfill
    \begin{minipage}[b]{0.65\textwidth}
        \small
        Consider the set of points $E$. The red circles belong to the sensitive group $0$ and the blue triangles belong to the sensitive group $1$. In this case, the company starts with the classifier $h$, and the auditor chooses the points highlighted in yellow as the auditing set. Points that are visually filled with color denote instances assigned a positive label, whereas points represented by outlines indicate instances assigned a negative label. Clearly, $\Delta(E,h) = |\frac{4}{6} - \frac{2}{6}| = \frac{1}{3}$. The company then changes its classifier to $h^\prime$, with $\Delta(E,h^\prime) = |\frac{5}{6} - \frac{1}{6}| = \frac{2}{3}$ while being consistent with the audit set, achieving its goal of deviation. The company cannot deviate to $h^{''}$ since it will violate consistency with the auditing set.
    \end{minipage}
    
    \caption{Analysis of classifier deviations under auditing sets.}
    \label{fig:classifier_deviation}
\end{figure*}

In this paper, we study this question by modeling fairness auditing as a min-max optimization between a computationally unbounded company and a budget-constrained auditor. Rather than proposing a new auditing algorithm, we characterize the limits of fairness certification under finite audit resources. We consider two natural auditing regimes. In the first, the auditor is restricted only by the size of the audit set. In the second, the auditor must additionally ensure that the certified fairness estimate approximates the fairness of the audited model within an additive tolerance~$\alpha$.

Our main contribution is an explicit characterization of the unavoidable post-audit demographic parity deviation under these audit constraints. For a budget-constrained auditor, Theorem~\ref{thm:theorem_1} derives a lower bound on the post-audit demographic parity deviation as a function of the audit budget and group imbalance. We further extend this analysis to an $\alpha$-tolerant auditor in Theorem~\ref{thm:theorem_2}, where the auditor must additionally certify the fairness of the audited model within an additive tolerance. Together, these results complement existing impossibility results by making the dependence on finite audit resources explicit and providing quantitative guarantees on what fairness audits can certify.

Finally, we empirically illustrate these theoretical limits using linear and neural network classifiers. We instantiate the audit constraints using a simple audit-set construction heuristic (Algorithm~\ref{alg:rasc}) and compare against random auditing. The experiments are intended to illustrate the theoretical trends rather than introduce a new auditing algorithm. Consistent with our analysis, increasing the audit budget and tightening the fairness tolerance reduce, but do not eliminate, the scope for post-audit manipulation.

\noindent\textbf{Related Work.} Strategic manipulation in fairness auditing was introduced by \cite{yan2022active}, who formalized \emph{manipulation-proof auditing} and showed that a company can present a compliant model during an audit while deploying a different model afterward. Subsequent work showed that private auditor priors can mitigate such manipulation under certain conditions, whereas public priors remain vulnerable \cite{bourree2025robust}.

Most closely related to our work, \cite{Godinot_2024} established fundamental impossibility results for black-box fairness auditing, proving that for sufficiently expressive hypothesis classes no auditing strategy can outperform uniform random sampling. More broadly, limited access has also been identified as a fundamental obstacle in AI safety evaluations \cite{Casper_2024}. Our work complements these results by quantifying the extent of unavoidable post-audit manipulation under finite audit budgets, providing explicit lower bounds that depend on the audit budget, group imbalance, and fairness tolerance.

Several other works study complementary forms of audit manipulation, including biased sampling \cite{fukuchi2020faking}, statistical fairwashing \cite{shahin2022washing}, auditing with external datasets \cite{betting}, and practical challenges in real-world bias audits \cite{auditing_audits,pragmatic_fairness}. In contrast, we focus on the strategic manipulation of the deployed model itself and characterize the limits of fairness certification under finite audit resources.

\section{Preliminaries}
In this section, we define the major notations and definitions formally.

\subsection{Classification Setup} 
Let \(D=\{(x_i,a_i,y_i)\}_{i=1}^n\) be a dataset, where \(x_i\in\mathcal{X}\subset\mathbb{R}^d\) denotes the feature vector, \(a_i\in\{0,1\}\) the binary sensitive attribute, and \(y_i\in\{0,1\}\) the label. A binary classifier \(h:\mathcal{X}\rightarrow\{0,1\}\) predicts \(\hat y_i=h(x_i)\).

\smallskip
\noindent\textbf{Demographic Parity.}
We measure fairness using the empirical \emph{Demographic Parity} (DP) violation~\cite{dwork12}. For any subset of samples \(S\subseteq D\), the DP violation of a classifier \(h\) is defined as

\begin{equation}
\Delta(S,h)=
\left|
\frac{\sum_{(x_i,a_i)\in S} h(x_i)a_i}
{\sum_{a_i\in S} a_i}
-
\frac{\sum_{(x_i,a_i)\in S} h(x_i)(1-a_i)}
{\sum_{a_i\in S}(1-a_i)}
\right|.
\label{eq:dp}
\end{equation}

A classifier satisfies demographic parity on \(S\) if \(\Delta(S,h)=0\). When the subset is the entire dataset, we simply write
\[
\Delta(h) := \Delta(D,h).
\]

\subsection{Auditing Setup}

A company owns a proprietary classifier \(h^\star\) that must be certified before deployment by an external auditor. The auditor selects an audit set \(S\subseteq D\) of size at most \(B\), observes the predictions of \(h^\star\) on \(S\), and estimates its demographic parity violation \(\Delta(S,h^\star)\).

After certification, the company may deploy a classifier \(h'\). To remain consistent with the audit, the deployed classifier must satisfy

\[
h'(x)=h^\star(x), \qquad \forall x\in S,
\]
while it may differ arbitrarily outside the audit set (Figure~\ref{fig:classifier_deviation}).

Following the worst-case analysis of prior work \cite{Godinot_2024}, we assume the company has unrestricted representational capacity and may choose any classifier satisfying the above consistency constraint. This assumption is motivated by the ability of over-parameterized models to memorize arbitrary labelings \cite{zhang2021understanding}.

\paragraph{Group-wise notation.}

For the audited classifier \(h^\star\), let \[
X_{ij}
=
\{(x,a,y)\in D:\;a=i,\;h^\star(x)=j\},
\] where \(i,j\in\{0,1\}\), and denote
\(x_{ij}=|X_{ij}|\).

The corresponding group sizes are \[
s_0=x_{00}+x_{01},
\qquad
s_1=x_{10}+x_{11},
\] and, without loss of generality, we assume group \(0\) is privileged. Throughout the paper, \(
N=x_{00}+x_{11}
\) denotes the total number of samples whose labels can increase the demographic parity violation if manipulated. Under this convention,

\[
\Delta(h^\star)
=
\frac{x_{01}}{s_0}
-
\frac{x_{11}}{s_1},
\]
and we write \(
s_{\max}=\max(s_0,s_1).
\)

\section{Formulation for Worst-case Analysis}
\label{sec:problem_formulation}
We formulate fairness auditing as a two-player min-max optimization between a budget-constrained auditor and a strategic company. The auditor first selects an audit set subject to its certification constraints, after which the company deploys a classifier consistent with the audit while maximizing demographic parity (DP) violation. The auditor operates under one of the following audit models.

\begin{itemize}[leftmargin=*]

\item \textbf{Budgeted Audit.}
The auditor is restricted only by the audit budget,
\begin{equation}
\mathcal{S}_B
=
\{S\subseteq D:\; |S|\le B\}.
\label{eq:only_bug}
\end{equation}

\item \textbf{Budgeted $\alpha$-Tolerant Audit.}
In addition to the budget constraint, the audit set must estimate the fairness of the certified classifier within an additive tolerance $\alpha$,
\begin{equation}
\mathcal{S}_{B,\alpha}
=
\left\{
S\subseteq D:
|S|\le B,\;
|\Delta(S,h^\star)-\Delta(h^\star)|\le\alpha
\right\}.
\label{eq:bud_alpha_tol}
\end{equation}

\end{itemize}

\vspace{1mm}

\noindent\textbf{Feasible Classifiers.}
For a certified classifier \(h^\star\) and audit set \(S\), define

\[
\mathcal H_{S,h^\star}
=
\{h\in\mathcal H:\;
h(x)=h^\star(x),\ \forall x\in S\},
\] the set of classifiers that are indistinguishable from \(h^\star\) on the audited samples.

\vspace{1mm}

\noindent\textbf{Company Objective.}
The company seeks to maximize post-audit demographic parity violation. Its utility from deploying \(h\in\mathcal H_{S,h^\star}\) is

\begin{equation}
U(h,h^\star)
=
\Delta(h)-\Delta(h^\star).
\label{eq:company_util}
\end{equation}

This objective models a worst-case strategic or adversarial company and is used to derive lower bounds on unavoidable post-audit manipulation.

The auditor minimizes this utility by selecting an audit set, while the company maximizes it by choosing a feasible classifier.

For a budgeted audit, the worst-case deviation is

\begin{equation}
\mathrm{WCD}_B(h^\star)
=
\inf_{S\in\mathcal S_B}
\;
\sup_{h\in\mathcal H_{S,h^\star}}
\left(
\Delta(h)-\Delta(h^\star)
\right),
\label{eq:wcdb}
\end{equation}

whereas for a budgeted $\alpha$-tolerant audit,

\begin{equation}
\mathrm{WCD}_{B,\alpha}(h^\star)
=
\inf_{S\in\mathcal S_{B,\alpha}}
\;
\sup_{h\in\mathcal H_{S,h^\star}}
\left(
\Delta(h)-\Delta(h^\star)
\right).
\label{eq:wcdr}
\end{equation}

In the next section, we derive explicit lower bounds for both formulations, quantifying how post-audit manipulation depends on the audit budget, group imbalance, and fairness tolerance.

\section{Quantifying the Limits of Budget-Constrained Fairness Audits}
\label{sec:lba}

In the previous section, we formulated fairness auditing as two min–max optimization problems. In this section, we derive explicit lower bounds on the values of these optimization problems. These bounds characterize the minimum post-audit demographic parity deviation that every auditor must tolerate under the respective audit constraints.

For clarity, complete proofs are deferred to Appendix~\ref{app:proofs}. Here we emphasize the intuition underlying each result.

\subsection{Budgeted Audits}

We first consider the simplest auditing model in which the auditor is constrained only by its sampling budget. Since every audited sample is protected from post-audit manipulation, increasing the budget directly limits the company's ability to alter the deployed classifier. The following theorem quantifies this dependence.

\smallskip
\noindent\textbf{Manipulation Strategy.} The lower bounds are derived by analyzing the company's most advantageous post-audit manipulation. Observe that increasing the demographic parity violation requires changing predictions in opposite directions across the two sensitive groups. Consequently, the company only needs to consider two sets of samples:

\begin{itemize}[leftmargin=1.2em,noitemsep]
    \item[] \(U\): samples from \(X_{00}\) whose predictions are changed from \(0\) to \(1\),
    \item[] \(P\): samples from \(X_{11}\) whose predictions are changed from \(1\) to \(0\).
\end{itemize}

Let \(u=|U|\) and \(p=|P|\) denote the number of manipulated samples in each set. If the auditor includes \(n_u\) samples from \(U\) and \(n_p\) samples from \(P\) in the audit set, those manipulations become infeasible. Consequently, the company's remaining post-audit deviation depends only on the unaudited manipulated samples, namely

\[
\Delta(h')-\Delta(h^\star)
=
\frac{u-n_u}{s_0}
+
\frac{p-n_p}{s_1}.
\]

The proofs of Theorems~\ref{thm:theorem_1} and~\ref{thm:theorem_2} characterize how the auditor should allocate its limited budget between these two manipulation sets. The following theorem lower bounds the value of $\mathrm{WCD}_{B}(h^\star)$ (Equation \ref{eq:wcdb}). \begin{theorem}[\textbf{Lower Bound under Budgeted Audit}] \label{thm:theorem_1}
 Let \( h^\star \) be the initial classifier, and the auditor selects an audit set \( S \subseteq D \) s.t. \( |S| \leq B \). Then, the maximum increase in demographic parity violation that the company can achieve through strategic deviation is lower bounded as:
\[
\Delta(h') - \Delta(h^\star) \geq \max\left\{ \frac{N - B}{s_{\max}},\ 0 \right\},
\]
where $h'$ is the classifier that the company ends with. Moreover, this bound is tight when \( s_0 = s_1 \).
\end{theorem}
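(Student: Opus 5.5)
The plan is to fix an arbitrary audit set $S\in\mathcal S_B$ and exhibit one classifier $h'\in\mathcal H_{S,h^\star}$ whose deviation is at least $\max\{(N-B)/s_{\max},0\}$. Since the bound then holds for every feasible $S$, it also holds for the infimum, which is exactly $\mathrm{WCD}_B(h^\star)$ in Equation~\ref{eq:wcdb}. Because the company has unrestricted capacity, we may define $h'$ pointwise on $D$. We use the manipulation strategy described above with the largest possible sets: take $U=X_{00}\setminus S$ and $P=X_{11}\setminus S$. We set $h'=1$ on $U$ and $h'=0$ on $P$, and let $h'=h^\star$ everywhere else. In particular $h'$ agrees with $h^\star$ on $S$, so it is feasible.

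Next we compute the deviation. Group sizes do not change, so the positive rate of group $0$ rises by $|U|/s_0$ and the positive rate of group $1$ falls by $|P|/s_1$. Since group $0$ is privileged, $\Delta(h^\star)=x_{01}/s_0-x_{11}/s_1\ge 0$. The signed difference therefore only grows, the absolute value in \eqref{eq:dp} stays consistent, and we obtain
\[
\Delta(h')-\Delta(h^\star)=\frac{|U|}{s_0}+\frac{|P|}{s_1}\ \ge\ \frac{|U|+|P|}{s_{\max}}.
\]
Let $n_u=|S\cap X_{00}|$ and $n_p=|S\cap X_{11}|$. Then $n_u+n_p\le|S|\le B$, so $|U|+|P|=N-n_u-n_p\ge N-B$. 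Together with the trivial choice $h'=h^\star$, which gives deviation $0$, this yields the $\max\{\cdot,0\}$ form.

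For tightness when $s_0=s_1=s$, we exhibit an audit set on which the company cannot do better. The auditor places its whole budget inside $X_{00}\cup X_{11}$ (all of it if $N\le B$). We then argue that for any $h'$ consistent with $S$, the quantity $\Delta(h')-\Delta(h^\star)$ is at most $(\#\text{unaudited flips }0\to1\text{ in group }0+\#\text{unaudited flips }1\to0\text{ in group }1)/s$. Changes in any other direction can only decrease the signed gap. If such changes flip the sign of the gap, the resulting absolute value is at most the gap obtained by flipping every sample the other way. We need to check that this reverse value is still bounded by $(N-B)/s$, or else handle it by noting that the equal-size symmetry lets us relabel which group is privileged.

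This upper-bound direction is the main obstacle: controlling the absolute value in $\Delta$ when the company tries to reverse the direction of unfairness. The approach to try first is a direct bound using $|r_0'-r_1'|\le \max(\text{upward gap},\text{downward gap})$. If that fails, we would restrict the tightness claim to the regime where manipulation in the privileged direction dominates.
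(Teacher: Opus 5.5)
Your lower-bound argument is correct and is the paper's own: flip every unaudited point of $X_{00}$ to $1$ and every unaudited point of $X_{11}$ to $0$, which gives deviation $|U|/s_0+|P|/s_1\ge (N-B)/s_{\max}$. Your version is tidier than the appendix proof, which has the company commit to $(u,p)$ and then lets the auditor respond. You fix $S$ first, matching the $\inf_S\sup_h$ order of $\mathrm{WCD}_B$, and you check that the signed gap keeps its sign, so the absolute value in \eqref{eq:dp} causes no trouble.

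The gap is in the tightness part, and the step that fails is your choice of audit set. If $S\subseteq X_{00}\cup X_{11}$, the company can reverse the direction of unfairness: set every group-$1$ point to $1$ and every group-$0$ point to $0$. Every audited point is already $0$ (in $X_{00}$) or $1$ (in $X_{11}$), so this $h'$ lies in $\mathcal H_{S,h^\star}$. It gives $\Delta(h')=1$. With $s_0=s_1=s$ the deviation is $1-\Delta(h^\star)=(x_{00}+x_{11})/s=N/s$, which is strictly larger than $\max\{(N-B)/s,0\}$ whenever $B,N>0$.

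Neither of your fallbacks rescues this. The bound $|r_0'-r_1'|\le\max(\text{upward gap},\text{downward gap})$ is exactly what evaluates to $N/s$ here. Relabeling the privileged group does not remove an option the company genuinely has.

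A cleverer $S$ does not help either. Let $a,b,c,d$ be the numbers of audited points in $X_{00},X_{11},X_{01},X_{10}$. The company's best response is then worth $\max\{s-a-b,\;s-c-d\}/s-\Delta(h^\star)$, so the auditor must split its budget between the two directions. For instance, take $s=4$, $(x_{00},x_{01},x_{10},x_{11})=(1,3,3,1)$ and $B=2$. The theorem predicts $0$, but $\mathrm{WCD}_B(h^\star)=1/4$. More generally the value is $(N-\lfloor B/2\rfloor)/s$ whenever $\lfloor B/2\rfloor\le\min\{N,\,2s-N\}$. So tightness, read in the unrestricted model of Section~\ref{sec:problem_formulation}, is false and cannot be proved.

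The paper's one-line tightness argument (``the inequality becomes an equality'') establishes something weaker. It only shows that the relaxation $1/s_0,1/s_1\ge 1/s_{\max}$ is lossless when $s_0=s_1$. That holds inside the paper's implicit restricted model, where $h'$ may differ from $h^\star$ only by flipping points of $X_{00}$ upward and points of $X_{11}$ downward. This is your last fallback. In that model your audit set (the whole budget on $X_{00}\cup X_{11}$) gives exactly $\max\{(N-B)/s,0\}$. You should state tightness in that restricted form rather than try to control the reverse direction.
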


\paragraph{Interpretation.} The lower bound admits a simple interpretation. The quantity $N=x_{00}+x_{11}$ represents the total number of samples whose labels the company would ideally manipulate in the absence of auditing.

Since the auditor can monitor at most $B$ samples, at least $N-B$ candidate manipulations remain available. Their contribution to demographic parity is normalized by the larger sensitive group, giving the factor $s_{\max}$.

Consequently, it captures the minimum manipulation that any budget-constrained auditor must tolerate. The bound is tight when the two sensitive groups are balanced.

\paragraph{Proof sketch.}

The company first proposes a manipulation set by changing labels in $X_{00}$ and $X_{11}$ to maximize demographic parity violation. The auditor responds by selecting $B$ samples from these candidate manipulations.

When $s_0<s_1$, each monitored sample from group $0$ blocks a larger increase in demographic parity than monitoring a sample from group $1$, and vice versa. Accordingly, the auditor allocates its budget to the group with the larger marginal influence until either the budget or the manipulation set is exhausted.

Analyzing these cases yields the auditor's optimal strategy. Substituting the company's optimal manipulation
\[
u=x_{00},
\qquad
p=x_{11},
\]
into the resulting expression gives the stated lower bound.

\subsection{Budgeted $\alpha$-Tolerant Audits}

The previous result allows the auditor to construct arbitrary audit sets. In practice, however, audit certificates are expected to accurately reflect the fairness of the certified model. We therefore consider auditors whose audit sets must estimate the demographic parity of the certified classifier within an additive tolerance $\alpha$. The following theorem lower bounds $\mathrm{WCD}_{B,\alpha}(h^\star)$ (Equation~\ref{eq:wcdr}).

\begin{theorem}[\textbf{Lower Bound under Budgeted $\alpha$-Tolerant Audit}]
\label{thm:theorem_2}
Let \( h^\star \) be the initial classifier with DP violation \( \Delta(h^\star) \). 
The auditor has budget \( B \) and tolerates up to \( \alpha \) deviation from the fairness of \( h^\star \). Define \( t = \Delta(h^\star) + \alpha \). Then, the maximum increase in fairness violation achievable by the company is lower bounded by:
\[
\Delta(h') - \Delta(h^\star) \geq \frac{N - B}{s_{\max}} +  \frac{x_{10} (s_1 - s_0t)}{s_0 s_1 (1 - t)},
\]
subject to feasibility conditions on \( B \), $t$, \( x_{10} \). The bound is tight when \( s_0 = s_1 \). $h'$ is the classifier that the company ends with. 
\end{theorem}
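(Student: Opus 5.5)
We will follow the same template as the proof of Theorem~\ref{thm:theorem_1}. The company still manipulates only through the sets $U\subseteq X_{00}$ and $P\subseteq X_{11}$, and its post-audit gain is $\frac{u-n_u}{s_0}+\frac{p-n_p}{s_1}$. The new ingredient is the $\alpha$-tolerance constraint in $\mathcal S_{B,\alpha}$: the audit set can no longer be an arbitrary collection of $B$ manipulable samples. We will show that this constraint forces the auditor to spend part of its budget on samples that block no manipulation, namely samples from $X_{10}$ (and $X_{01}$). Together with the budget bound of Theorem~\ref{thm:theorem_1}, this produces the extra term in the statement.

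The first step is to compute $\Delta(S,h^\star)$ for an audit set with composition $(n_{00},n_{01},n_{10},n_{11})$, where $n_{ij}=|S\cap X_{ij}|$. The auditor's best choice of $S$ puts as many samples as possible in $X_{00}\cup X_{11}$. Doing so makes $\Delta(S,h^\star)$ large: the audited privileged group looks mostly negative and the audited unprivileged group mostly positive, so the absolute value makes the estimate far from $\Delta(h^\star)$. The condition $|\Delta(S,h^\star)-\Delta(h^\star)|\le\alpha$ therefore caps the estimate at $t=\Delta(h^\star)+\alpha$. We will write this cap as a linear inequality in the $n_{ij}$ after clearing denominators. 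Solving it for the minimal number of "wasted" samples, taking $n_{10}$ proportional to $x_{10}$ as the binding quantity, should give the wasted amount in the form $x_{10}(s_1-s_0t)/(s_1(1-t))$ up to normalization. The factor $1/(1-t)$ comes from isolating $n_{10}$ in an inequality of the form $\frac{n_{11}}{n_{10}+n_{11}}-\cdots\le t$, and the feasibility conditions on $B$, $t$, $x_{10}$ in the statement are exactly the requirements that $t<1$, that this count is nonnegative, and that it does not exceed $B$.

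Next, the effective budget is reduced by these wasted samples. We apply the marginal-influence argument of Theorem~\ref{thm:theorem_1}: each blocked sample in group $0$ is worth $1/s_0$ and each in group $1$ is worth $1/s_1$, so the auditor fills the group of smaller size first, and normalizing by $s_{\max}$ lower-bounds the company's remaining gain. The base term $(N-B)/s_{\max}$ is then the Theorem~\ref{thm:theorem_1} bound, and the budget diverted to $X_{10}$ returns as the additive term $\frac{x_{10}(s_1-s_0t)}{s_0s_1(1-t)}$ once it is divided by the group size and the company's optimal choice $u=x_{00}$, $p=x_{11}$ is substituted. For tightness when $s_0=s_1$, every inequality in this chain becomes an equality: the choice of which group to block no longer matters, and the auditor attains the bound by auditing exactly the minimal required number of $X_{10}$ samples plus $B$ minus that number of manipulable samples.

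The main obstacle is the second step. The tolerance constraint is nonlinear, since it is a ratio with an absolute value, and the optimal audit composition may need to mix $X_{10}$ and $X_{01}$ samples in both groups. We would first fix the direction of the absolute value by assuming the audited estimate exceeds $\Delta(h^\star)$. We would then restrict to sets that use only $X_{10}$ as filler, argue by an exchange argument that this is without loss of generality, and treat the remaining cases as the stated feasibility conditions rather than resolving them fully.
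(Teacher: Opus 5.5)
You have the structural facts right: fillers come from $X_{10}$ and $X_{01}$, the audit set is skewed in the reverse direction so the tolerance binds at $t$, and $u=x_{00}$, $p=x_{11}$ at the end. But the step that carries the whole content of the theorem is asserted, not derived (``should give \dots{} up to normalization''). Worse, the mechanism you propose for it cannot produce the stated term.

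The tolerance constraint involves only the composition of $S$ and the numbers $\Delta(h^\star)\pm\alpha$. So the least filler it forces depends only on those and on how many $X_{00}$, $X_{11}$ samples you audit. For instance, with group $0$ drawn entirely from $X_{00}$, $\frac{n_{11}}{n_{11}+n_{10}}\le t$ only forces $n_{10}\ge\frac{(1-t)n_{11}}{t}$. Such a count does not scale with the cell size $x_{10}$, and it sees $s_0,s_1$ only through $t$. Your ``wasted amount'' $x_{10}(s_1-s_0t)/(s_1(1-t))$ is therefore not a number of audited samples at all.

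The case $s_0=s_1$ makes this concrete. The stated bound becomes $(N-B+x_{10})/s_0$, which corresponds to an audit set containing \emph{every} sample of $X_{10}$. That is not minimal filler, and it is not what your tightness witness describes. Your recombination step has the same defect. Theorem~\ref{thm:theorem_1} with effective budget $B-W$ yields an extra term $W/s_{\max}$, whereas the stated term weights $1/s_0$ and $1/s_1$ differently. Dividing by $s_0$ instead is not supported by the marginal-influence argument you invoke.

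The missing idea is a configuration in which the cap $n_1\le x_{10}$ binds; a minimal-filler computation never reaches it. The paper proceeds as follows.
\begin{itemize}
\item It fixes the group-wise audit sizes $f_1p$ and $f_0u$ with $f_1p+f_0u=B$, imposes the tolerance as the equality $\frac{f_1p-n_1}{f_1p}-\frac{n_0}{f_0u}=t$, and solves it for $n_0$.
\item The auditor's objective $\frac{n_u}{s_0}+\frac{n_p}{s_1}$ is then linear in $n_1$ and is pushed to the boundary $n_1=x_{10}$.
\item Over the group-$1$ size the objective has no interior maximum, and the endpoint $f_1p=\frac{x_{10}}{1-t}$ (where $n_0=0$) is selected. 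This gives $n_u=B-\frac{x_{10}}{1-t}$ and $n_p=\frac{tx_{10}}{1-t}$.
\item Substituting into $\frac{u-n_u}{s_0}+\frac{p-n_p}{s_1}$ gives the tolerance term as $\frac{x_{10}}{(1-t)s_0}-\frac{tx_{10}}{(1-t)s_1}$. The $\frac{x_{10}}{1-t}$ budget units moved out of $U$ each forfeit $\frac{1}{s_0}$, and only the fraction $t$ of them that land on $P$ recover $\frac{1}{s_1}$.
\item Monotonicity in $t$ then fixes $t=\Delta(h^\star)+\alpha$.
\end{itemize}

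Sharper algebra along your route would not recover this. In the paper's model with $s_0=s_1$, spend $B-m$ slots on $X_{00}$ and a group-$1$ part of size $m<\frac{x_{10}}{1-t}$ with $X_{11}$-share exactly $t$. This gives $\Delta(S,h^\star)=t$, hence an admissible set. It blocks $B-(1-t)m>B-x_{10}$ samples whenever $x_{00}\ge B-m$ and $tm\le x_{11}$. An honest minimization of filler therefore lands strictly below the stated expression. The formula is tied to the paper's restriction to audit sets with $n_1=x_{10}$ and $f_1p\ge\frac{x_{10}}{1-t}$, and you would have to adopt that restriction explicitly to arrive at it.
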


\paragraph{Interpretation.}

Unlike Theorem~\ref{thm:theorem_1}, the lower bound now consists of two components,

\[
\underbrace{\frac{N-B}{s_{\max}}}_{\text{Budget}}
\;+\;
\underbrace{
\frac{x_{10}(s_1-s_0t)}
{s_0s_1(1-t)}
}_{\text{Tolerance}}.
\]


The first term is identical to the budget-only setting and captures the unavoidable manipulation arising from unaudited samples. The second term is a consequence of the tolerance constraint. Since the auditor must preserve the demographic parity of the certified model, it loses the freedom to concentrate its entire budget on the company's manipulation set. This additional restriction creates further opportunities for strategic deviation.


\paragraph{Proof sketch.}

The $\alpha$-tolerance constraint introduces an additional coupling between the composition of the audit set and its estimated demographic parity.

We first show that an optimal auditor prioritizes monitoring manipulated samples, using non-manipulated samples only when necessary to satisfy the tolerance constraint. This reduces the auditor's optimization to determining how its budget should be allocated between the two sensitive groups.

Solving the resulting constrained optimization yields closed-form expressions for the monitored samples $(n_u,n_p)$. Substituting these into the company's objective together with its optimal manipulation strategy gives the stated lower bound.

\subsection{Practical Variant: Balanced Audit Sets}

Although Theorem~\ref{thm:theorem_2} applies to general audit sets, many practical auditing protocols construct balanced audit sets. Under this common assumption we obtain a simpler closed-form bound.

\begin{assumption}
The auditor ensures that the chosen set $S_{B,\alpha}$ contains an equal number of elements from both the sensitive attribute groups.     
\label{as:as1}
\end{assumption}

\begin{proposition}[\textbf{Lower Bound under Assumption \ref{as:as1} and Budgeted $\alpha$-Tolerant Audit}]
\label{prop:prop_1}
Let $h^\star$ be the initial classifier and the auditor have a budget \( B \) under $\alpha$-tolerance. Define \( t := \Delta(h^\star) + \alpha \). Then, under Assumption~\ref{as:as1}, the maximum increase in fairness violation achievable by the company is lower bounded by:
\[
\Delta(h') - \Delta(h^\star) \geq \frac{N - \frac{B}{2}(1 + t)}{s_0},
\]
subject to appropriate feasibility constraints on \( B \). The bound is tight when $s_0 = s_1$. $h'$ is the classifier that the company ends with. 
\end{proposition}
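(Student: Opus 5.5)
We follow the template of the proofs of Theorems~\ref{thm:theorem_1} and~\ref{thm:theorem_2}. We fix the company's manipulation strategy at its unconstrained optimum, $u=x_{00}$ and $p=x_{11}$, so that any audit set $S$ leaves the deviation
\[
\Delta(h')-\Delta(h^\star)=\frac{x_{00}-n_u}{s_0}+\frac{x_{11}-n_p}{s_1},
\]
where $n_u=|S\cap X_{00}|$ and $n_p=|S\cap X_{11}|$. The auditor's problem then reduces to maximising the ``blocked'' quantity $n_u/s_0+n_p/s_1$. Under Assumption~\ref{as:as1}, $S$ has $B/2$ elements from each group, and the $\alpha$-tolerance constraint $|\Delta(S,h^\star)-\Delta(h^\star)|\le\alpha$ must hold. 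We will only use the upper half of the tolerance constraint, which is where the binding restriction sits when the auditor tries to load the audit set with manipulable points.

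Next we write the audit-set demographic parity in terms of the group counts. In group $0$ the audit set contains $n_u$ negatives (from $X_{00}$) and $B/2-n_u$ positives (from $X_{01}$). In group $1$ it contains $n_p$ positives (from $X_{11}$) and $B/2-n_p$ negatives (from $X_{10}$). Hence
\[
\Delta(S,h^\star)=\frac{B/2-n_u}{B/2}-\frac{n_p}{B/2}=1-\frac{2(n_u+n_p)}{B}.
\]
The tolerance condition $\Delta(S,h^\star)\le \Delta(h^\star)+\alpha=t$ then gives the single linear constraint
\[
n_u+n_p\;\ge\;\tfrac{B}{2}(1-t).
\]
We must also handle the lower half of the constraint and the box constraints $n_u\le\min(x_{00},B/2)$, $n_p\le\min(x_{11},B/2)$, $B/2-n_p\le x_{10}$ and $B/2-n_u\le x_{01}$. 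We would collect these as the ``appropriate feasibility constraints on $B$'' of the statement. The relevant regime is the one where they do not bind, with $B/2\le \min(x_{00},x_{11})$ and $x_{01},x_{10}$ large enough.

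With the groups balanced inside $S$, the auditor can block at most $B/2$ from each group. However, the tolerance window caps how many \emph{non}-manipulable points it must include, which works in the opposite direction. This is the step we expect to be the main obstacle: making the combined constraints yield exactly $n_u+n_p\le \tfrac{B}{2}(1+t)$ (equivalently $\Delta(S,h^\star)\ge -t$, i.e.\ the lower side of the absolute-value tolerance) rather than the trivial bound $B$. Our plan is to use the lower half, $\Delta(S,h^\star)\ge\Delta(h^\star)-\alpha$. With $\Delta(h^\star)\ge 0$ for the privileged group, we would weaken this to $\Delta(S,h^\star)\ge -t$, which yields $n_u+n_p\le \tfrac{B}{2}(1+t)$.

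Then, since $s_0\le s_{\max}$, each blocked sample contributes at most $1/\min(s_0,s_1)$ to the blocked quantity. We would bound the remaining deviation from below by placing everything on the normaliser $s_0$: use $x_{00}/s_0+x_{11}/s_1\ge N/s_0$ when group $0$ is the smaller group, and otherwise argue via the same case split as Theorem~\ref{thm:theorem_1}. Subtracting $(n_u+n_p)/s_0\le \tfrac{B}{2}(1+t)/s_0$ gives $\bigl(N-\tfrac{B}{2}(1+t)\bigr)/s_0$. For tightness when $s_0=s_1$, we exhibit the audit set attaining $n_u+n_p=\tfrac{B}{2}(1+t)$ within the feasibility region. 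Because both normalisers then coincide, the company's response $u=x_{00}$, $p=x_{11}$ achieves the bound with equality.
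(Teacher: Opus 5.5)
Your route is the paper's route: fix the forward manipulation $u=x_{00}$, $p=x_{11}$, reduce the tolerance constraint to $n_u+n_p\le\frac{B}{2}(1+t)$, then normalise. The genuine gap is the final normalisation, and it cannot be repaired. You need both $\frac{x_{00}}{s_0}+\frac{x_{11}}{s_1}\ge\frac{N}{s_0}$ and $\frac{n_u}{s_0}+\frac{n_p}{s_1}\le\frac{n_u+n_p}{s_0}$. In general the first holds only when $s_1\le s_0$; you have the direction reversed, since it holds when group $0$ is the \emph{larger} group. The second holds only when $s_0\le s_1$. So your chain works only for $s_0=s_1$. Appealing to ``the same case split as Theorem~\ref{thm:theorem_1}'' cannot rescue $s_0<s_1$, because the stated inequality is false there.

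For instance, take $s_0=10$ with $x_{00}=2$, $x_{01}=8$, and $s_1=100$ with $x_{10}=x_{11}=50$. Then $\Delta(h^\star)=0.3$ and $N=52$. Let $B=4$ and $\alpha=0.2$, so $t=0.5$. The balanced set made of both points of $X_{00}$ and one point each of $X_{10}$ and $X_{11}$ has $\Delta(S,h^\star)=0.5$, so it is admissible, and your regime $B/2\le\min(x_{00},x_{11})$ holds. Yet the claimed bound is $(52-3)/10=4.9$, while no classifier can raise the violation by more than $1-\Delta(h^\star)=0.7$.

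What your setup does prove, by combining the residuals term by term, is
\[
\frac{x_{00}-n_u}{s_0}+\frac{x_{11}-n_p}{s_1}\;\ge\;\frac{N-(n_u+n_p)}{s_{\max}}\;\ge\;\frac{N-\frac{B}{2}(1+t)}{s_{\max}},
\]
which agrees with the statement exactly when $s_0\ge s_1$. The paper's proof also reaches this $s_{\max}$ bound. It then passes to $s_0$ ``since $1/s_{\max}\le 1/s_0$'', which runs the wrong way for a lower bound. So the obstacle you hit reflects a defect in the statement, not only in your argument.

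Two further points. First, your ``main obstacle'' with the tolerance constraint comes from dropping the absolute value in Eq.~(\ref{eq:dp}). With $m=n_u+n_p$ one has $\Delta(S,h^\star)=|1-2m/B|$, so the upper half $\Delta(S,h^\star)\le t$ alone gives $\frac{B}{2}(1-t)\le m\le\frac{B}{2}(1+t)$. This is the paper's step; it writes the audit set with group $1$ privileged. Your fallback through the lower half relies on the signed reading and is not valid under the paper's definition, though it lands on the right inequality.

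Second, your tightness argument, like the paper's, considers only forward manipulation. Because $\Delta(h')$ is an absolute value, the company may instead reverse the gap: set unaudited points of group $0$ to $0$ and of group $1$ to $1$. For $s_0=s_1=s$ and a balanced set with $m$ blocked points, the better of the two moves gives $\Delta(h')\ge 1-\min(m,|S|-m)/s\ge 1-\frac{B}{2s}$. Since $1-\Delta(h^\star)=N/s$, against every balanced audit set the company gains at least $(N-\frac{B}{2})/s$. This strictly exceeds $(N-\frac{B}{2}(1+t))/s$ whenever $t>0$. Exhibiting a set with $m=\frac{B}{2}(1+t)$ therefore does not establish equality, and the tightness claim does not hold as stated.
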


\paragraph{Interpretation.} Balanced audit sets reduce the flexibility available to the auditor, but they do not eliminate strategic manipulation.

The resulting lower bound continues to scale with the number of unaudited samples, showing that balanced representation alone cannot guarantee robust fairness certification under finite audit budgets. The bound remains tight when the two sensitive groups are balanced.

\section{Illustrating the Theoretical Limits}
To illustrate the theoretical limits derived in the previous section, we instantiate the audit constraints using a simple audit-set construction heuristic~(\textsf{RASC}). The heuristic ranks samples according to their susceptibility to post-audit manipulation and constructs an audit set by prioritizing those that are most likely to influence the classifier's decision boundary. The audit set is then iteratively refined until the required $\alpha$-tolerance constraint is satisfied. Algorithm~\ref{alg:rasc} summarizes the procedure.

For the experiments in this section, the ranking criterion is instantiated using the geometry of the underlying classifier. For linear models, samples are ranked by their distance to the decision boundary, whereas for neural networks, they are ranked using the classifier's prediction confidence. The heuristic serves only to instantiate the audit constraints used in our theoretical analysis and is not intended as a novel auditing algorithm.

\begin{algorithm}[t]
\DontPrintSemicolon
\small
\caption{Replacement-based Auditing Set Construction (\textsf{RASC})}
\label{alg:rasc}

\KwIn{Dataset $D$, classifier $h^\star$, audit budget $B$, tolerance $\alpha$}
\KwOut{Audit set $S$}

Partition $D$ into $\{X_{00},X_{01},X_{10},X_{11}\}$ according to the predictions of $h^\star$\;

Rank samples in each partition according to their proximity to the decision boundary\;

Initialize an audit set $S$ of size at most $B$ using the highest-priority samples\;

\While{$|\Delta(S,h^\star)-\Delta(h^\star)|>\alpha$}{
    Replace samples in $S$ with the next highest-priority candidates to reduce the fairness estimation error\;
}

\Return{$S$}

\end{algorithm}

\label{sec:expt}

\pgfplotscreateplotcyclelist{mycolorlist}{%
    {blue, mark=*},
    {red, mark=square*},
    {green!50!black, mark=triangle*},
    {purple!70!blue, mark=diamond*},
    {orange, mark=otimes*},
    {brown!60!black, mark=oplus*},
    {teal, mark=pentagon*},
    {magenta!60!black, mark=x}
}

\begin{figure*}[!t]
    \centering

    \begin{minipage}[b]{0.48\textwidth}
        \centering
        \begin{tikzpicture}
            \begin{axis}[
                xlabel={Budget},
                ylabel={$\Delta(h') - \Delta(h^\star)$},
                grid=minor,
                width = 5.5cm,
                cycle list name=mycolorlist,
            ]
            \addplot table[x index=0, y index=1, col sep = space]{Linear_Model/varying_budget_fair.txt};
            \addplot table[x index=0, y index=2, col sep = space]{Linear_Model/varying_budget_fair.txt};
            \addplot table[x index=0, y index=3, col sep = space]{Linear_Model/varying_budget_fair.txt};
            \addplot table[x index=0, y index=4, col sep = space]{Linear_Model/varying_budget_fair.txt};
            \addplot table[x index=0, y index=5, col sep = space]{Linear_Model/varying_budget_fair.txt};
            \addplot table[x index=0, y index=6, col sep = space]{Linear_Model/varying_budget_fair.txt};
            \addplot table[x index=0, y index=7, col sep = space]{Linear_Model/varying_budget_fair.txt};
            \addplot table[x index=0, y index=8, col sep = space]{Linear_Model/varying_budget_fair.txt};
            \end{axis}
        \end{tikzpicture}
    \end{minipage}
    \hfill
    \begin{minipage}[b]{0.48\textwidth}
        \centering
        \begin{tikzpicture}
            \begin{axis}[
                xlabel={Budget},
                ylabel={$\Delta(h') - \Delta(h^\star)$},
                grid=minor,
                width = 5.5cm,
                cycle list name=mycolorlist,
            ]
            \addplot table[x index=0, y index=1, col sep = space]{NN/varying_budget_fair.txt};
            \addplot table[x index=0, y index=2, col sep = space]{NN/varying_budget_fair.txt};
            \addplot table[x index=0, y index=3, col sep = space]{NN/varying_budget_fair.txt};
            \addplot table[x index=0, y index=4, col sep = space]{NN/varying_budget_fair.txt};
            \addplot table[x index=0, y index=5, col sep = space]{NN/varying_budget_fair.txt};
            \addplot table[x index=0, y index=6, col sep = space]{NN/varying_budget_fair.txt};
            \addplot table[x index=0, y index=7, col sep = space]{NN/varying_budget_fair.txt};
            \addplot table[x index=0, y index=8, col sep = space]{NN/varying_budget_fair.txt};
            \end{axis}
        \end{tikzpicture}
    \end{minipage}

    \begin{tikzpicture}
        \begin{axis}[
            hide axis,
            width=5cm,
            height=4cm,
            legend columns=2,
            legend style={
                draw=none,
                font=\footnotesize,
                row sep=-3pt,
                column sep=10pt
            },
            cycle list name=mycolorlist
        ]
        \addplot coordinates {(0,0)}; \addlegendentry{\textsf{RASC} with $\alpha = 0.05$}
        \addplot coordinates {(0,0)}; \addlegendentry{\textsf{RASC} with $\alpha = 0.1$}
        \addplot coordinates {(0,0)}; \addlegendentry{\textsf{RASC} with $\alpha = 0.2$}
        \addplot coordinates {(0,0)}; \addlegendentry{Random Selection}
        \addplot coordinates {(0,0)}; \addlegendentry{Budgeted Auditor}
        \addplot coordinates {(0,0)}; \addlegendentry{0.1-Tolerant Budgeted Auditor}
        \addplot coordinates {(0,0)}; \addlegendentry{0.4-Tolerant Budgeted Auditor}
        \addplot coordinates {(0,0)}; \addlegendentry{0.7-Tolerant Budgeted Auditor}
        \end{axis}
    \end{tikzpicture}

        \caption{(Left) Post-audit manipulation by the Logistic Regression class with varying budget (Right) Post-audit manipulation by the Neural Network class with varying budget}
    \label{fig:combined_budget_plots}
\end{figure*}

\pgfplotscreateplotcyclelist{mycolorlist}{
    {blue, mark=*},
    {red, mark=square*},
    {green!60!black, mark=triangle*},
    {orange, mark=diamond*},
}

\begin{figure*}[!t]
    \centering
    \begin{minipage}[b]{0.48\textwidth}
        \centering
        \begin{tikzpicture}
            \begin{axis}[
                xlabel={$\alpha$},
                ylabel={$\Delta(h') - \Delta(h^\star)$},
                grid=minor,
                width = 5.5cm,
                cycle list name=mycolorlist,
            ]
            \addplot table[x index=0, y index=1]{Linear_Model/varying_alpha_biased.txt};
            \addplot table[x index=0, y index=2]{Linear_Model/varying_alpha_biased.txt};
            \addplot table[x index=0, y index=3]{Linear_Model/varying_alpha_biased.txt};
            \addplot table[x index=0, y index=4]{Linear_Model/varying_alpha_biased.txt};
            \end{axis}
        \end{tikzpicture}
    \end{minipage}
    \hfill
    \begin{minipage}[b]{0.48\textwidth}
        \centering
        \begin{tikzpicture}
            \begin{axis}[
                xlabel={$\alpha$},
                ylabel={$\Delta(h') - \Delta(h^\star)$},
                grid=minor,
                width = 5.5cm,
                cycle list name=mycolorlist,
            ]
            \addplot table[x index=0, y index=1]{NN/varying_alpha_fair.txt}; 
            \addplot table[x index=0, y index=2]{NN/varying_alpha_fair.txt}; 
            \addplot table[x index=0, y index=3]{NN/varying_alpha_fair.txt}; 
            \addplot table[x index=0, y index=4]{NN/varying_alpha_fair.txt};
            \end{axis}
        \end{tikzpicture}
    \end{minipage}

    \begin{tikzpicture}
        \begin{axis}[
            hide axis,
            width=12cm,
            height=2cm,
            legend columns=3,
            legend style={
                draw=none,
                font=\small,
                row sep=-3pt,
                column sep=10pt
            },
            cycle list name=mycolorlist
        ]
        \addplot coordinates {(0,0)}; \addlegendentry{\textsf{RASC}}
        \addplot coordinates {(0,0)}; \addlegendentry{Random Selection}
        \addplot coordinates {(0,0)}; \addlegendentry{Budgeted Auditor}
        \addplot coordinates {(0,0)}; \addlegendentry{$\alpha$-Tolerant Budgeted Auditor}
        \end{axis}
    \end{tikzpicture}

    \caption{(Left) Post-audit manipulation by the Logistic Regression class with varying values of $\alpha$ (Right) Post-audit manipulation by the Neural Network class with varying values of $\alpha$}
    \label{fig:combined_alpha_plots}
\end{figure*}

\begin{figure*}[t]
    \centering

    \begin{minipage}[b]{0.48\textwidth}
        \centering
        \begin{tikzpicture}
            \begin{axis}[
                xlabel={$\Delta(h^\star)$},
                ylabel={$\Delta(h')$},
                grid=minor,
                width = 5.5cm,
                cycle list name=mycolorlist,
            ]
            \addplot table[x index=0, y index=1]{Linear_Model/varying_dp.txt}; 
            \addplot table[x index=0, y index=2]{Linear_Model/varying_dp.txt}; 
            \addplot table[x index=0, y index=3]{Linear_Model/varying_dp.txt}; 
            \addplot table[x index=0, y index=4]{Linear_Model/varying_dp.txt}; 
            \end{axis}
        \end{tikzpicture}
    \end{minipage}
    \hfill
    \begin{minipage}[b]{0.48\textwidth}
        \centering
        \begin{tikzpicture}
            \begin{axis}[
                xlabel={$\Delta(h^\star)$},
                ylabel={$\Delta(h')$},
                grid=minor,
                width = 5.5cm,
                cycle list name=mycolorlist,
            ]
            \addplot table[x index=0, y index=1]{NN/varying_dp.txt}; 
            \addplot table[x index=0, y index=2]{NN/varying_dp.txt}; 
            \addplot table[x index=0, y index=3]{NN/varying_dp.txt}; 
            \addplot table[x index=0, y index=4]{NN/varying_dp.txt}; 
            \end{axis}
        \end{tikzpicture}
    \end{minipage}
    \hfill

    \begin{tikzpicture}
        \begin{axis}[
            hide axis,
            width=12cm,
            height=2cm,
            legend columns=3,
            legend style={
                draw=none,
                font=\small,
                row sep=-3pt,
                column sep=10pt
            },
            cycle list name=mycolorlist
        ]
        \addplot coordinates {(0,0)}; \addlegendentry{\textsf{RASC}}
        \addplot coordinates {(0,0)}; \addlegendentry{Random Selection}
        \addplot coordinates {(0,0)}; \addlegendentry{Budgeted Auditor}
        \addplot coordinates {(0,0)}; \addlegendentry{$0.05$-Tolerant Budgeted Auditor}
        \end{axis}
    \end{tikzpicture}

    \caption{(Left) Post-audit manipulation values for the Logistic Regression class with varying values of $\Delta(h^\star)$ (Right) Post-audit manipulation values for the Neural Network class in the learned space with varying values of $\Delta(h^\star)$}
    \label{fig:combined_plots}
\end{figure*}

\subsection{Datasets and Baselines}

\noindent\textbf{Datasets.}
We evaluate the proposed audit-set construction heuristic on three benchmark datasets: \textbf{Student Performance}~\cite{stu_por}, \textbf{German Credit Risk}~\cite{german}, and \textbf{COMPAS Recidivism}~\cite{compas}. We use \emph{gender} as the sensitive attribute for Student Performance and German Credit Risk, and \emph{race} for COMPAS.

\noindent\textbf{Experimental Setup.}
We compare the proposed heuristic against random audit-set selection under varying audit budgets \(B\), tolerance levels \(\alpha\), and initial demographic parity violations \(\Delta(h^\star)\). For each audit set, we compute the most biased classifier \(h'\) that remains consistent with the audit set using the reduction-based approach of \cite{reductions}, following \cite{yan2022active}. Random auditing is repeated 10 times and the average post-audit deviation is reported.

For reference, we also plot the theoretical lower bounds from Theorem~\ref{thm:theorem_1} and Proposition~\ref{prop:prop_1}, which correspond to a computationally unbounded company operating on the same group-wise counts \(x_{ij}\). Unless otherwise stated, results are shown for the Student Performance dataset; analogous trends for the remaining datasets are provided in the supplementary.

\pgfplotscreateplotcyclelist{mycolorlist}{
    {blue, mark=*},
    {red, mark=square*},
    {green!60!black, mark=triangle*},
    {orange, mark=diamond*},
}

\subsection{Results}

\noindent\textbf{Effect of Audit Budget.}
Figure~\ref{fig:combined_budget_plots} shows the post-audit demographic parity (DP) violation as the audit budget \(B\) increases. Across both logistic regression and neural network classifiers, larger audit budgets reduce the company's ability to manipulate the deployed model, consistent with the theoretical lower bounds. The proposed audit-set construction heuristic consistently achieves lower post-audit deviation than random auditing, with the improvement being more pronounced for the more expressive neural network models.

\noindent\textit{Comparison with Theory.}
The empirical deviations are consistently below the theoretical bounds derived in Theorem~\ref{thm:theorem_1} and Proposition~\ref{prop:prop_1}. This is expected since the theoretical analysis assumes a computationally unbounded company, whereas the experiments restrict manipulation to finite hypothesis classes. Nevertheless, the empirical trends closely mirror the theoretical prediction that increasing the audit budget reduces, but does not eliminate, post-audit manipulation.

\vspace{1mm}

\noindent\textbf{Effect of Fairness Tolerance.}
Figure~\ref{fig:combined_alpha_plots} examines the influence of the tolerance parameter \(\alpha\) with a fixed audit budget. As \(\alpha\) increases, the auditor has greater flexibility in constructing representative audit sets, leading to lower post-audit manipulation for logistic regression models. For neural networks, the same qualitative trend is less pronounced, reflecting the increased expressivity of the underlying hypothesis class.

\noindent\textit{Comparison with Theory.}
The theoretical analysis predicts that larger tolerance allows the $\alpha$-tolerant auditor to approach the behavior of the budget-only auditor while producing audit sets that remain representative of the certified classifier. The empirical results broadly support this behavior.

\vspace{1mm}

\noindent\textbf{Effect of Initial Fairness.}
Figure~\ref{fig:combined_plots} studies the effect of the initial demographic parity violation \(\Delta(h^\star)\). Under finite hypothesis classes, the post-audit DP generally increases with the initial unfairness of the certified classifier, indicating that starting from a more biased model can benefit a strategic company. This effect is substantially reduced under tighter $\alpha$-tolerant auditing.

\noindent\textit{Comparison with Theory.}
In contrast, the theoretical bounds are largely insensitive to the initial fairness of \(h^\star\), since an unrestricted company can already realize near-optimal manipulation even from relatively fair starting points. The discrepancy highlights the gap between worst-case guarantees and practical manipulation under finite hypothesis classes.

\section{Discussion and Conclusion}

The lower bounds derived in this paper provide a quantitative characterization of the limitations of finite-budget fairness auditing. A key insight from Theorem~\ref{thm:theorem_1} is that the difficulty of auditing is fundamentally determined by the number of samples that remain outside the audit set. Every unaudited sample provides an opportunity for strategic post-audit manipulation. This interpretation also explains why increasing the audit budget produces diminishing returns; unless the auditor is able to certify nearly all manipulable samples, some degree of manipulation is unavoidable.

Theorem~\ref{thm:theorem_2} further shows that requiring audit sets to accurately estimate the fairness of the certified classifier introduces an inherent trade-off. In the budget-only setting, the auditor is free to concentrate its budget entirely on the samples most valuable to the company. The $\alpha$-tolerance constraint restricts this flexibility by requiring the audit set to remain representative of the certified model. Consequently, the auditor sacrifices some ability to directly block manipulation in exchange for producing a more faithful fairness certificate. At the same time, representative certificates expose the fairness of the initially certified classifier, discouraging companies from beginning with highly biased models. Thus, $\alpha$-tolerant auditing balances two competing objectives: limiting post-audit manipulation and providing an informative fairness certificate.

Our empirical study suggests that these theoretical insights persist even under practical hypothesis classes. Although the observed post-audit deviations are substantially smaller than the worst-case lower bounds, they exhibit the same qualitative dependence on the audit budget, fairness tolerance, and initial classifier fairness. This gap is expected, since the theoretical analysis assumes a computationally unbounded company, whereas practical models are constrained by finite hypothesis classes.

Finally, our analysis focuses on static, one-shot fairness certification. An important direction for future work is to understand how richer auditing protocols, such as repeated audits, deployment-time monitoring, or multiple independent information sources~\cite{bourrée2025mitigatingfairwashingusingtwosource,iterative}, can reduce the gap between practical auditing procedures and the worst-case limits characterized in this work.

\bibliographystyle{plain} 
\bibliography{References}

\begin{thebibliography}{10}

\bibitem{reductions}
A.~Agarwal, A.~Beygelzimer, M.~Dudik, J.~Langford, and H.~Wallach.
\newblock A reductions approach to fair classification.
\newblock In Jennifer Dy and Andreas Krause, editors, {\em Proceedings of the
  35th ICML}, volume~80, pages 60--69, 2018.

\bibitem{angwin2016machine}
J.~Angwin, J.~Larson, S.~Mattu, and L.~Kirchner.
\newblock Machine bias: There’s software used across the country to predict
  future criminals. and it’s biased against blacks.
\newblock {\em ProPublica}, 2016.

\bibitem{compas}
J.~Angwin, J.~Larson, S.~Mattu, and L.~Kirchner.
\newblock Propublica compas recidivism data.
\newblock \url{https://github.com/propublica/compas-analysis}, 2016.

\bibitem{barocas16}
S.~Barocas and A.~D. Selbst.
\newblock Big data's disparate impact.
\newblock {\em Cal. L. Rev.}, 104:671, 2016.

\bibitem{bourree2025robust}
Bourr{\'e, J. G.}e, Augustin Godinot, Martijn~De Vos, Milos Vujasinovic, Sayan
  Biswas, Gilles Tredan, Erwan~Le Merrer, and Anne-Marie Kermarrec.
\newblock Robust {ML} auditing using prior knowledge.
\newblock In {\em ICML Workshop on Technical AI Governance (TAIG)}, 2025.

\bibitem{bourrée2025mitigatingfairwashingusingtwosource}
J.~G. Bourrée, E.~L. Merrer, G.~Tredan, and B.~Rottembourg.
\newblock Mitigating fairwashing using two-source audits, 2025.

\bibitem{Casper_2024}
S.~Casper, C.~Ezell, C.~Siegmann, N.~Kolt, T.~L. Curtis, B.~Bucknall, A.~Haupt,
  K.~Wei, J.~Scheurer, M.~Hobbhahn, L.~Sharkey, S.~Krishna, M.~Von~Hagen,
  S.~Alberti, A.~Chan, Q.~Sun, M.~Gerovitch, D.~Bau, M.~Tegmark, D.~Krueger,
  and D.~Hadfield-Menell.
\newblock Black-box access is insufficient for rigorous ai audits.
\newblock In {\em The 2024 FAccT}, FAccT ’24, page 2254–2272, 2024.

\bibitem{betting}
B.~Chugg, S.~Cortes-Gomez, B.~Wilder, and A.~Ramdas.
\newblock Auditing fairness by betting.
\newblock In {\em Proceedings of the 37th International Conference on Neural
  Information Processing Systems}, 2023.

\bibitem{stu_por}
P.~Cortez.
\newblock {Student Performance}.
\newblock UCI Machine Learning Repository, 2008.

\bibitem{nyc_local_law_144}
{New Y. C.} Council.
\newblock New york city local law 144 of 2021: Automated employment decision
  tools.
\newblock Local Law No. 144 of 2021, effective January 1, 2023, 2021.

\bibitem{dwork12}
C.~Dwork, M.~Hardt, T.~Pitassi, O.~Reingold, and R.~Zemel.
\newblock Fairness through awareness.
\newblock In {\em Proceedings of the 3rd Innovations in Theoretical Computer
  Science Conference}, pages 214--226, 2012.

\bibitem{fabris2025fairness}
A.~Fabris, N.~Baranowska, M.~J. Dennis, D.~Graus, P.~Hacker, J.~Saldivar,
  F.~Zuiderveen~Borgesius, and A.~J. Biega.
\newblock Fairness and bias in algorithmic hiring: A multidisciplinary survey.
\newblock {\em ACM Transactions on Intelligent Systems and Technology},
  16(1):1--54, 2025.

\bibitem{fukuchi2020faking}
K.~Fukuchi, S.~Hara, and T.~Maehara.
\newblock Faking fairness via stealthily biased sampling.
\newblock In {\em AAAI}, volume~34, pages 412--419, 2020.

\bibitem{auditing_audits}
M.~K. Gerchick, Ro~Encarnaci\'{o}n, Cole Tanigawa-Lau, Lena Armstrong, Ana
  Guti\'{e}rrez, and Dana\'{e} Metaxa.
\newblock Auditing the audits: Lessons for algorithmic accountability from
  local law 144's bias audits.
\newblock In {\em Proceedings of the 2025 FAccT}, page 29–44, 2025.

\bibitem{Godinot_2024}
A.~Godinot, E.~L. Merrer, G.~Trédan, C.~Penzo, and F.~Taïani.
\newblock Under manipulations, are some ai models harder to audit?
\newblock In {\em 2024 SaTML (SaTML)}, page 644–664, 2024.

\bibitem{german}
H.~Hofmann.
\newblock {Statlog (German Credit Data)}.
\newblock UCI Machine Learning Repository, 1994.

\bibitem{iterative}
P.~Maneriker, C.~Burley, and S.~Parthasarathy.
\newblock Online fairness auditing through iterative refinement.
\newblock In {\em Proceedings of the 29th KDD}, page 1665–1676, 2023.

\bibitem{eu_ai_act_2024}
{European and Union, C. o. t. E.} Parliament.
\newblock Regulation (eu) 2024/1689: Artificial intelligence act.
\newblock Official Journal of the European Union, 2024.

\bibitem{shahin2022washing}
A.~Shahin~Shamsabadi, M.~Yaghini, N.~Dullerud, S.~Wyllie, Ulrich A{\"\i}vodji,
  Aisha Alaagib, S{\'e}bastien Gambs, and Nicolas Papernot.
\newblock Washing the unwashable: On the (im) possibility of fairwashing
  detection.
\newblock {\em NeurIPS}, 35:14170--14182, 2022.

\bibitem{pragmatic_fairness}
J.~J. Smith, M.~Madaio, R.~Burke, and C.~Fiesler.
\newblock Pragmatic fairness: Evaluating ml fairness within the constraints of
  industry.
\newblock In {\em Proceedings of the 2025 FAccT}, page 628–638, 2025.

\bibitem{yan2022active}
T.~Yan and C.~Zhang.
\newblock Active fairness auditing.
\newblock In {\em ICML}, pages 24929--24962, 2022.

\bibitem{zhang2021understanding}
C.~Zhang, S.~Bengio, M.~Hardt, B.~Recht, and O.~Vinyals.
\newblock Understanding deep learning (still) requires rethinking
  generalization.
\newblock {\em Communications of the ACM}, 64(3):107--115, 2021.

\end{thebibliography}

\appendix
\section{Omitted Proofs}
\label{app:proofs}
We present the proofs of Theorem~\ref{thm:theorem_1}, Theorem~\ref{thm:theorem_2}, and Proposition~\ref{prop:prop_1}. Throughout the appendix, let \(U\) denote the set of manipulated samples from sensitive group \(0\), and \(P\) denote the manipulated samples from sensitive group \(1\). Let \(u=|U|\) and \(p=|P|\) denote the corresponding numbers of manipulated samples, while \(n_u\) and \(n_p\) denote the numbers of monitored samples selected by the auditor from \(U\) and \(P\), respectively.

\subsection{Proof of Theorem~\ref{thm:theorem_1}}

\begin{proof} Assume, without loss of generality, that
\[
\frac{x_{01}}{s_0}>\frac{x_{11}}{s_1},
\]
so increasing predictions in group \(0\) and decreasing predictions in group \(1\) increases the demographic parity violation. 

After the company manipulates \(u\) samples from \(U\) and \(p\) samples from \(P\), and the auditor monitors \(n_u\) and \(n_p\) of these samples, respectively, the post-audit increase in demographic parity becomes
\[
\Delta(h')-\Delta(h^\star)
=
\frac{u-n_u}{s_0}
+
\frac{p-n_p}{s_1},
\] where \(n_u+n_p\le B\). The auditor therefore allocates its budget to maximize \(
\frac{n_u}{s_0}+\frac{n_p}{s_1},
\) giving priority to the sensitive group with the smaller denominator.

\medskip

\noindent\textbf{Case 1: \(s_0<s_1\).}
The auditor first monitors samples from \(U\). This yields

\[
\Delta(h')-\Delta(h^\star)\ge
\begin{cases}
\displaystyle
\frac{u-B}{s_0}+\frac{p}{s_1},
&
B\le u,
\\[2mm]
\displaystyle
\frac{u+p-B}{s_1},
&
u<B\le u+p,
\\[2mm]
0,
&
B>u+p.
\end{cases}
\]

\medskip

\noindent\textbf{Case 2: \(s_1<s_0\).}
By symmetry, the auditor first monitors samples from \(P\), giving

\[
\Delta(h')-\Delta(h^\star)\ge
\begin{cases}
\displaystyle
\frac{p-B}{s_1}+\frac{u}{s_0},
&
B\le p,
\\[2mm]
\displaystyle
\frac{u+p-B}{s_0},
&
p<B\le u+p,
\\[2mm]
0,
&
B>u+p.
\end{cases}
\]

\medskip

\noindent\textbf{Case 3: \(s_0=s_1\).}
Both groups contribute equally, and therefore

\[
\Delta(h')-\Delta(h^\star)\ge
\begin{cases}
\displaystyle
\frac{u+p-B}{s_0},
&
B\le u+p,
\\[2mm]
0,
&
B>u+p.
\end{cases}
\]

Finally, the company's optimal manipulation is obtained by choosing
\[
u=x_{00},
\qquad
p=x_{11},
\]
so that
\[
u+p=x_{00}+x_{11}=N.
\]
Combining the above cases and using
\[
\frac1{s_{\max}}
\le
\frac1{s_0},
\qquad
\frac1{s_{\max}}
\le
\frac1{s_1},
\]
gives

\[
\Delta(h')-\Delta(h^\star)
\ge
\max\left\{
\frac{N-B}{s_{\max}},
0
\right\}.
\]

When \(s_0=s_1\), the inequality becomes an equality, establishing tightness.
\end{proof}

\subsection{Proof of Theorem~\ref{thm:theorem_2}}
The proof proceeds in three steps. We first characterize the structure of an optimal audit set under the $\alpha$-tolerance constraint. We then solve the auditor's optimization problem for a fixed manipulation proposed by the company, and finally substitute the company's optimal manipulation strategy to obtain the stated lower bound.

\begin{lemma}[Structure of an Optimal Audit Set]
For an optimal $\alpha$-tolerant auditor, every sample outside the manipulation sets $U$ and $P$ is selected from $X_{01}$ or $X_{10}$. Furthermore, the resulting audit set makes sensitive group $1$ the privileged group.
\end{lemma}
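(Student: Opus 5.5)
We prove the two claims by an exchange argument on an optimal audit set $S$. Write $S$ as the union of its intersections with the four cells $X_{00},X_{01},X_{10},X_{11}$, and split the $X_{00}$ and $X_{11}$ parts further into manipulated samples (in $U$ or $P$) and non-manipulated ones. The company's gain depends on $S$ only through $(n_u,n_p)$, via
\[
\Delta(h')-\Delta(h^\star)=\frac{u-n_u}{s_0}+\frac{p-n_p}{s_1}.
\]
Samples outside $U\cup P$ therefore never reduce the company's utility directly. Their only role is to help $S$ meet the tolerance constraint $|\Delta(S,h^\star)-\Delta(h^\star)|\le\alpha$.

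\textbf{Step 1: filler samples come from $X_{01}$ or $X_{10}$.} We use that the company's optimal manipulation is $u=x_{00}$, $p=x_{11}$, as in the proof of Theorem~\ref{thm:theorem_1}. Then every sample of $X_{00}\cup X_{11}$ already lies in $U\cup P$, so any non-manipulated sample in $S$ must come from $X_{01}\cup X_{10}$.

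For a general (suboptimal) manipulation we argue by exchange. Suppose $S$ contains a non-manipulated sample $z\in X_{00}\setminus U$. Swapping $z$ for an unaudited element of $U$ (if one exists) leaves the group counts of $S$ and the group-$0$ positive count unchanged, so $\Delta(S,h^\star)$ and feasibility are preserved, while $n_u$ increases strictly. If $U\subseteq S$ already, the company could add $z$ to $U$ at no cost. Either way this contradicts optimality, and the same argument handles $X_{11}\setminus P$.

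\textbf{Step 2: group $1$ becomes privileged in $S$.} Audited manipulated samples lie in $X_{00}$ (label $0$, group $0$) and $X_{11}$ (label $1$, group $1$). These lower the group-$0$ positive rate and raise the group-$1$ positive rate within $S$. Concentrating the budget on $U\cup P$, as Theorem~\ref{thm:theorem_1} suggests, therefore pushes $\frac{\#\{a=0,h^\star=1\}}{|S_0|}-\frac{\#\{a=1,h^\star=1\}}{|S_1|}$ toward negative values. In other words, the audit-set DP has sign opposite to $\Delta(h^\star)$, and group $1$ is privileged on $S$.

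To make this precise, we write the audit-set estimate in terms of $(n_u,n_p,m_{01},m_{10})$, where $m_{01}$ and $m_{10}$ are the filler counts. The tolerance constraint then reads
\[
\frac{n_p}{n_p+m_{10}}-\frac{m_{01}}{n_u+m_{01}}\le \Delta(h^\star)+\alpha=t .
\]
This is exactly where $t$ enters Theorem~\ref{thm:theorem_2}. We then check that with the minimal filler needed the signed difference stays nonnegative in group $1$'s favour. Equivalently, we show that switching to the opposite orientation would require adding more $X_{01}$ filler and fewer monitored samples, which is suboptimal.

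\textbf{Main obstacle.} The hard part is Step 2. It requires showing that the optimizer sits on the branch $\Delta_1(S)\ge\Delta_0(S)$ of the absolute value, and this depends on the feasibility conditions on $B$, $t$ and $x_{10}$ left implicit in Theorem~\ref{thm:theorem_2}. We would first try a direct comparison: fix the budget and compute the maximal $n_u/s_0+n_p/s_1$ on each branch. On the branch where group $0$ is privileged, $X_{01}$ filler is needed to offset the $X_{00}$ samples, and that filler consumes budget without blocking any manipulation. We then show that this yields a value no better than the minimal-$X_{10}$-filler branch, invoking the feasibility assumptions explicitly where the comparison needs them.
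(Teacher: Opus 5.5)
Your Step~1 is fine and is the paper's exchange argument. Observing that $u=x_{00}$, $p=x_{11}$ makes it vacuous is a clean shortcut. The sub-case $U\subseteq S$, where you let the company enlarge $U$, is not an exchange for the auditor, since the lemma fixes $(U,P)$, but it is moot once $U=X_{00}$, $P=X_{11}$.

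The genuine gap is Step~2. The justification you intend is that the opposite orientation ``would require more $X_{01}$ filler and fewer monitored samples''. This is false in general. It is the same minimal-replacement heuristic the paper uses (start from an all-manipulated set and add $X_{01}$, $X_{10}$ samples only as needed). It overlooks audit sets in which one group consists only of filler, so that group's positive rate is pinned at $1$ (or $0$).

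Write $r_0,r_1$ for the positive rates of the two groups inside $S$. Take $s_0=1000$, $x_{00}=100$, $x_{01}=900$, $s_1=50$, $x_{10}=10$, $x_{11}=40$, so $\Delta(h^\star)=0.1$. Let $\alpha=0.05$ (so $t=0.15$), $B=20$, $U=X_{00}$, $P=X_{11}$.
\begin{itemize}
\item Consider the set consisting of one $X_{01}$ sample, $18$ samples of $P$ and one $X_{10}$ sample. It has $r_0=1$ and $r_1=18/19$, so $\Delta(S,h^\star)=1/19$ is within $\alpha$ of $0.1$. It uses only two filler samples and scores $n_u/s_0+n_p/s_1=18/50=0.36$.
\item A feasible $S$ with $r_1>r_0$ must contain a sample of $U$, since otherwise $r_0=1$. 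Then $n_p\ge 17$ is impossible: at most two filler slots remain, and every such choice leaves $r_1-r_0\ge 1/3>t$. Hence these sets score at most $(4+20\cdot 16)/1000=0.324$.
\item Ties are infeasible, since $|0-0.1|>\alpha$.
\end{itemize}
So every optimal audit set here makes group~$0$ privileged. This holds even though the instance satisfies $x_{10}/(1-t)\le\min(B,x_{11})$ and $B\le x_{00}+x_{10}/(1-t)$, which is what the paper's later allocation $f_0,f_1\in[0,1]$ needs. The branch comparison you propose would come out the wrong way, and appealing to unstated feasibility conditions does not rescue it.

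What does go through is a dominance argument in the continuous relaxation that the paper's fractional parametrization implicitly adopts. Take a feasible set with $r_0\ge r_1$ that contains some of $U$ and some of $P$. Hold $(n_u,n_p)$ fixed and delete filler: $r_0$ falls and $r_1$ rises monotonically until $r_1-r_0=1$. By continuity the gap enters the window $\Delta(h^\star)-\alpha\le r_1-r_0\le t$. The resulting set blocks the same samples, is no larger, and has group~$1$ weakly privileged. For $t\le 1$, further deletion drives the gap to exactly $t$, which is the equality used in the paper's proof. This is shorter than optimizing on each branch and needs no condition on $B$ or $x_{10}$.

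It does not cover sets whose group-$0$ part avoids $U$ (or whose group-$1$ part avoids $P$). In the relaxation these are limits of sets that meet $U$ in a vanishing amount, to which the argument applies; this is why the paper's fractional computation does not notice them. For genuine integral audit sets, however, they can be strictly better, as the example shows.

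So the orientation claim needs an explicit extra hypothesis, such as the relaxation or a restriction on admissible audit sets. The paper's proof leaves exactly this point unjustified, so you should not expect to complete Step~2 for the statement as written.
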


\begin{proof} The auditor maximizes the number of monitored manipulated samples while satisfying the $\alpha$-tolerance constraint.

Replacing samples from $X_{00}$ or $X_{11}$ by manipulated samples from $U$ or $P$ can only increase the number of monitored deviations without affecting feasibility. Hence an optimal audit set never contains samples from $X_{00}$ or $X_{11}$ outside the manipulation sets.

Starting from an audit set consisting entirely of manipulated samples (with a fraction $f_1$ samples from $P$ and a fraction $f_0$ samples from $U$), the auditor introduces samples from $X_{01}$ and $X_{10}$ only as needed to satisfy the tolerance constraint. Since this requires the smallest number of replacements, it is optimal. The resulting audit set necessarily satisfies

\[
\frac{f_1p-n_1}{f_1p}
-
\frac{n_0}{f_0u}
=t,
\]

where $f_1p + f_0u = B$.
\end{proof}

\begin{lemma}[Optimal Replacement Strategy]
For initial fixed fractions $f_1$ and $f_0$ for samples from $P$ and $U$ respectively , the optimal replacement satisfies

\[
n_1=x_{10},
\qquad
n_0
=
f_0u
\left(
1-t-\frac{x_{10}}{f_1p}
\right).
\]
\end{lemma}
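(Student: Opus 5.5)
The plan is to treat the replacement step as a small linear program in $(n_0,n_1)$ and solve it with an exchange argument. By the preceding structural lemma, the audit set starts with $f_1p$ monitored samples from $P$ (group $1$, prediction $1$) and $f_0u$ monitored samples from $U$ (group $0$, prediction $0$). Replacements come only from $X_{10}$ and $X_{01}$:

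\begin{itemize}
\item replacing $n_1$ samples of $P$ with samples from $X_{10}$ lowers the group-$1$ positive rate of the audit set from $1$ to $(f_1p-n_1)/(f_1p)$;
\item replacing $n_0$ samples of $U$ with samples from $X_{01}$ raises the group-$0$ positive rate from $0$ to $n_0/(f_0u)$.
\end{itemize}

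This keeps the group sizes in the audit set fixed at $f_1p$ and $f_0u$. The tolerance constraint, in the form derived in the structural lemma, then reads
\[
\frac{n_1}{f_1p}+\frac{n_0}{f_0u}=1-t .
\]
The side constraints are $0\le n_1\le x_{10}$ (only $x_{10}$ samples of $X_{10}$ exist) and $0\le n_0\le x_{01}$. I will first check that the tolerance constraint must be tight: any slack allows one replacement to be undone, which returns a manipulated sample to the audit set and strictly increases the blocked deviation.

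Next I set up the auditor's objective. Each replacement unmonitors one manipulated sample, costing the auditor $1/s_1$ in blocked DP for an $n_1$-replacement and $1/s_0$ for an $n_0$-replacement. So the auditor minimizes $n_1/s_1+n_0/s_0$, or simply the total $n_0+n_1$ if one follows the structural lemma's "fewest replacements" criterion, subject to the linear equality above. This is a linear program over a segment, so the optimum is at an endpoint.

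The exchange argument compares the rates at which the two kinds of replacement close the gap. One $n_1$-replacement reduces the left-hand side by $1/(f_1p)$, and one $n_0$-replacement by $1/(f_0u)$. I will argue that shifting a unit from $n_0$ to $n_1$ keeps the constraint satisfied (after rescaling) while weakly decreasing the objective. The relevant comparison is $s_1/(f_1p)$ versus $s_0/(f_0u)$, or $f_1p$ versus $f_0u$ for the count objective.

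Given that, the auditor pushes $n_1$ to its upper bound $x_{10}$. Substituting $n_1=x_{10}$ into the equality gives
\[
n_0=f_0u\Bigl(1-t-\frac{x_{10}}{f_1p}\Bigr).
\]
Finally, I will record the feasibility conditions needed for this formula:
\begin{itemize}
\item $x_{10}\le f_1p(1-t)$, so that $n_0\ge 0$;
\item $n_0\le x_{01}$.
\end{itemize}
These are the "feasibility conditions on $B$, $t$, $x_{10}$" referred to in Theorem~\ref{thm:theorem_2}.

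I expect the exchange inequality to be the main obstacle, because it is not unconditional. Which replacement is cheaper depends on the sizes $f_1p$ versus $f_0u$ (and on $s_0,s_1$), and the stated formula corresponds to the corner where group-$1$ replacements are preferred. The first thing I would try is to show that this preference follows from the structural lemma's choice that sensitive group $1$ is the privileged group in the audit set. That choice concentrates the budget so that $f_1p$ is the smaller block, which makes each $X_{10}$ replacement move the audit DP by more. If this does not follow in general, I would instead add the comparison explicitly as one of the feasibility conditions under which the lemma, and hence Theorem~\ref{thm:theorem_2}, is claimed.
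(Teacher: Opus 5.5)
Your proposal follows the paper's proof: use the tight tolerance equality to eliminate $n_0$, note that the auditor's objective $n_u/s_0+n_p/s_1$ becomes linear in $n_1$ on the interval cut out by $0\le n_1\le x_{10}$ and $0\le n_0\le x_{01}$, and take the endpoint $n_1=x_{10}$. The paper does not derive the direction of monotonicity either; it simply invokes the theorem's feasibility assumptions, so your fallback of stating $s_1 f_0u\ge s_0 f_1p$ as an explicit hypothesis is the right move (your first idea will not supply this inequality, because group $1$ being privileged in the audit set constrains positive rates rather than block sizes, and the weighted objective rather than the replacement count is what Lemma A.3 uses downstream).
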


\begin{proof} Using the tolerance constraint,

\[
n_0
=
f_0u
\left(
1-\frac{n_1}{f_1p}-t
\right),
\] the auditor's objective becomes a linear function of $n_1$.

The feasible interval for $n_1$ is determined by
\(0\le n_0\le x_{01}\) and \(0\le n_1\le x_{10}\). Under the feasibility assumptions of the theorem, the objective is monotone over this interval and therefore attains its optimum at the boundary, \(
n_1=x_{10}.
\) Substituting this value yields the stated expression for \(n_0\).
\end{proof}

\begin{lemma}[Optimal Budget Allocation] The optimal allocation of the audit budget satisfies
\[
f_1
=
\frac{x_{10}}
{(1-t)p},
\qquad
f_0
=
\frac1{u}
\left(
B-\frac{x_{10}}{1-t}
\right),
\] provided the feasibility conditions of the theorem hold.

\end{lemma}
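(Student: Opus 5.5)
We take the outputs of the two previous lemmas as given. By the Structure lemma, the audit set contains only the $f_1 p$ samples drawn from $P$, the $f_0 u$ samples drawn from $U$, and replacement samples from $X_{10}$ and $X_{01}$. These satisfy the budget identity $f_1p+f_0u=B$ and the tolerance equation
\[
\frac{f_1p-n_1}{f_1p}-\frac{n_0}{f_0u}=t.
\]
By the Optimal Replacement lemma, for fixed fractions the best choice is $n_1=x_{10}$ and $n_0=f_0u\bigl(1-t-\tfrac{x_{10}}{f_1p}\bigr)$.

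The first step is to rewrite the auditor's objective as a function of $f_1$ alone. The objective is the blocked deviation, i.e. the number of monitored manipulated samples weighted by group:
\[
\frac{n_u}{s_0}+\frac{n_p}{s_1}.
\]
We treat $n_u$ and $n_p$ as the genuinely monitored manipulated counts $f_0u$ and $f_1p$, reduced by whatever replacements displace them; the precise bookkeeping should follow the convention in the Structure lemma. Substituting $n_1=x_{10}$ and eliminating $f_0u=B-f_1p$ leaves a one-variable expression in $f_1$. Because $n_0$ enters through the product $f_0u\cdot x_{10}/(f_1p)$, the expression may be rational rather than linear in $f_1$.

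The second step is to identify the admissible interval for $f_1$ and locate the optimum. The constraints are $0\le n_0\le x_{01}$, $0\le f_0\le 1$ and $0\le f_1\le1$. The key one is $n_0\ge 0$, which gives $1-t-\tfrac{x_{10}}{f_1p}\ge 0$, i.e.
\[
f_1p\ge \frac{x_{10}}{1-t}.
\]
We would argue that the objective decreases in $f_1$ on the admissible interval, or at least is maximized at this endpoint. The reason is that each extra sample placed in $P$ beyond what the tolerance forces is worth only $1/s_1$, while moving it to $U$ is worth $1/s_0$; in the regime the theorem assumes, the $U$-side is at least as valuable once the replacement cost is accounted for. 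The binding endpoint is therefore $f_1=\tfrac{x_{10}}{(1-t)p}$. This makes $n_0=0$, and the remaining budget gives $f_0=\tfrac1u\bigl(B-\tfrac{x_{10}}{1-t}\bigr)$, which is the claimed allocation.

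The main obstacle is the monotonicity claim in the second step. Its sign depends on $s_0$ versus $s_1$ and on $t$, and we would state it as part of the \emph{feasibility conditions} of Theorem~\ref{thm:theorem_2}. To handle it, we would first differentiate the one-variable objective in $f_1$ and show that its sign equals the sign of a quantity like $s_1-s_0t$, which is exactly the factor appearing in the bound. We would then record the positivity of that quantity as the feasibility hypothesis. We would also check that $B\ge x_{10}/(1-t)$, so that $f_0\ge0$, and that $f_1\le1$, i.e. $x_{10}\le(1-t)p$; both belong among the stated conditions.
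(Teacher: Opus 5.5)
Your skeleton matches the paper's. You substitute $n_1=x_{10}$ and the formula for $n_0$, eliminate $f_0u=B-f_1p$, restrict to the interval cut out by $n_0\ge 0$ and the budget, and take the lower endpoint. The gap is in the step you call the crux: the monotonicity argument fails as planned.

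Do the bookkeeping you left open, writing $y=f_1p$. Then $n_p=y-x_{10}$ and $n_u=f_0u-n_0=(B-y)\bigl(t+x_{10}/y\bigr)$. Up to additive constants, the auditor maximizes
\[
g(y)=\Bigl(\frac{1}{s_1}-\frac{t}{s_0}\Bigr)y+\frac{Bx_{10}}{s_0\,y},
\]
which is the paper's $g(f_1)$. Its derivative $\frac{1}{s_1}-\frac{t}{s_0}-\frac{Bx_{10}}{s_0y^2}$ depends on $y$. So its sign is not that of a fixed quantity like $s_1-s_0t$. The linear coefficient has the sign of $s_0-ts_1$; the factor $s_1-s_0t$ only appears when the company's deviation is evaluated at the chosen endpoint.

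In fact $g''(y)=2Bx_{10}/(s_0y^3)>0$. Take the balanced case $s_0=s_1$ with $p\ge B$, where the theorem claims tightness and your hypothesis $s_1-s_0t>0$ holds. With $y_L=x_{10}/(1-t)$, one checks $g'(y_L)\le 0\le g'(B)$. So $g$ decreases and then increases on $[y_L,B]$, and no positivity hypothesis of the kind you propose gives monotonicity.

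Your marginal heuristic ($1/s_1$ versus $1/s_0$) misses why. Shifting budget toward $P$ also changes how many $X_{01}$ replacements the tolerance equation forces in group $0$.

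The missing idea is the one the paper uses. Since $g$ is a linear term plus $c/y$ with $c=Bx_{10}/s_0>0$, it is convex and has no interior maximum, so the optimum lies at an endpoint. Which endpoint wins must then be settled by comparing values, not by a derivative sign. For example,
\[
g(y_L)-g(B)=(B-y_L)\Bigl(\frac{1}{s_0}-\frac{1}{s_1}\Bigr).
\]
So the lower endpoint beats or ties the upper end $y=B$ exactly when $s_0\le s_1$. A condition of this kind, together with $y_L\le\min(B,p)$, is what belongs among the feasibility hypotheses.
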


\begin{proof} Substituting the optimal values of \(n_0\) and \(n_1\) reduces the auditor's objective to

\[
g(f_1)
=
f_1p
\left(
\frac1{s_1}
-
\frac{t}{s_0}
\right)
+
\frac{Bx_{10}}
{s_0f_1p}.
\]

The feasible interval for \(f_1\) is determined by the budget constraint and the non-negativity of \(n_0\). Since \(g(f_1)\) has no interior maximum over this interval, its optimum is attained at an endpoint. Under the theorem's feasibility conditions, the lower endpoint is optimal, yielding \(
f_1
=
\frac{x_{10}}
{(1-t)p},
\) and the expression for \(f_0\) follows immediately from \(
f_0u
+
f_1p
=
B.
\)
\end{proof}

\begin{proof} The company's objective is

\[
\frac{u-n_u}{s_0}
+
\frac{p-n_p}{s_1}.
\]  Clearly, \( u=x_{00}\) and \( p=x_{11} \) maximize this quantity. Substituting the optimal values of \(n_u\) and \(n_p\) obtained from Lemma~A.3 yields
\[
\Delta(h')-\Delta(h^\star)
\ge
\frac{N-B}{s_{\max}}
+
x_{10}
\frac{s_1-s_0t}
{s_0s_1(1-t)}.
\]

Since the bound decreases monotonically with \(t\), the auditor chooses the largest feasible value, \( t = \Delta(h^\star)+\alpha, \) which completes the proof.
\end{proof}

\subsection{Proof of Proposition~\ref{prop:prop_1}}

\begin{proof} Under Assumption~\ref{as:as1}, the auditor allocates exactly half of its budget to each sensitive group. Consequently, \[
n_u+n_0=\frac{B}{2},
\qquad
n_p+n_1=\frac{B}{2},
\] where \(n_u,n_p\) denote monitored manipulated samples and \(n_0,n_1\) denote monitored non-manipulated samples. The $\alpha$-tolerance constraint requires \(
\frac{n_p}{n_p+n_1}
-
\frac{n_0}{n_u+n_0}
=
t,
\) where \(\Delta(h^\star)-\alpha\ \leq t \leq \Delta(h^\star)+\alpha\). Combining the above equalities immediately gives \(
n_p-n_0=\frac{Bt}{2},
\) and therefore \(
n_u+n_p
=
\frac{B(1+t)}{2}.
\)

Hence, for a fixed manipulation proposed by the company, the auditor's optimization reduces to maximizing \[
\frac{n_u}{s_0}
+
\frac{n_p}{s_1},
\] subject to the above constraints. When \(s_0<s_1\), every monitored sample from group \(0\) blocks a larger increase in demographic parity than one from group \(1\). The auditor therefore allocates as many manipulated samples as possible to \(n_u\). By symmetry, when \(s_1<s_0\), the auditor instead prioritizes \(n_p\). Finally, when \(s_0=s_1\), every feasible allocation yields the same objective value.

In all three cases, the resulting post-audit deviation satisfies
\[
\Delta(h')-\Delta(h^\star)
\ge
\max\left\{
\frac{u+p-\frac{B(1+t)}{2}}
{s_{\max}},
\,0
\right\},
\] subject to the feasibility conditions stated in the proposition.

Finally, the company maximizes the deviation by choosing \(
u=x_{00}\),
\(
p=x_{11},
\) so that \(
u+p
=
x_{00}+x_{11}
=
N.
\) Substituting these values gives

\[
\Delta(h')-\Delta(h^\star)
\ge
\max\left\{
\frac{N-\frac{B}{2}(1+t)}
{s_{\max}},
\,0
\right\}.
\] Since \(
\frac{1}{s_{\max}}
\le
\frac{1}{s_0},
\) we obtain the stated lower bound

\[
\Delta(h')-\Delta(h^\star)
\ge
\frac{N-\frac{B}{2}(1+t)}
{s_0}.
\]

Finally, the bound is decreasing in \(t\), and therefore the auditor chooses the largest admissible value,

\[
t=\Delta(h^\star)+\alpha.
\]

When \(s_0=s_1\), the inequality becomes an equality, establishing the tightness of the bound.

\end{proof}

\section{Architecture of the Neural Network used in the Experiments}
The neural network consists of three fully connected layers with \texttt{tanh} activations. It is defined as
\[
f(x)=\mathbf{W}_3^\top
\left(
\tanh\!\left(
\mathbf{W}_2^\top
\left(
\tanh(\mathbf{W}_1^\top x+\mathbf{b}_1)
\right)
+\mathbf{b}_2
\right)
\right)
+\mathbf{b}_3,
\]
where
\[
\mathbf{W}_1\in\mathbb{R}^{i\times50},\qquad
\mathbf{W}_2\in\mathbb{R}^{50\times30},\qquad
\mathbf{W}_3\in\mathbb{R}^{30\times2},
\]
and \(\mathbf{b}_1,\mathbf{b}_2,\mathbf{b}_3\) are the corresponding bias vectors. Here, \(i\) denotes the input dimension of the dataset.
\section{Additional Experiments}
\label{sec:add_exp}

We present results for the COMPAS Recidivism Racial Bias and the German Credit Risk datasets in this section. We experiment with only the Logistic Regression class for these datasets to better illustrate the trends in deviation achieved by the company with varying DP violation  of the initial classifier $\Delta(h^\star)$. We explain observations for only the COMPAS Recidivism Racial Bias dataset, since the trends are very similar for the German Credit Risk dataset.

\subsection{COMPAS Recidivism Racial Bias Dataset}

\textbf{Effect of Budget \( B \).} We evaluate the deviation by varying the audit budget \( B \) while keeping the tolerance level \( \alpha \) fixed. As shown in Figure~\ref{fig:combined_budget_plots_compas} (Left), the deviation is very low for budgets greater than about 50. There is a slight benefit for the auditor to using \textsf{RASC} vs random selection, due to the heuristic working more naturally for linear classifiers. Also, there is little change in the deviation with varying value of $\alpha$, as all the approaches (including random selection) restrict the company's deviation strongly. As a result, the gains for the auditor obtained by increasing the value of $\alpha$ are less pronounced. 

\noindent\textit{Comparison with Theoretical Bounds.} Again, the empirical deviations observed are substantially more conservative than the theoretical bounds derived in Theorem 1 and Proposition \ref{prop:prop_1}. This is expected due to the limited expressivity of the empirical hypothesis class. Moreover, the Budgeted and $\alpha$-Tolerant auditors give very similar bounds, due to the significantly lower values of the budget $B$ with respect to the overall dataset. 

\noindent\textit{Effect of Initial Classifier Fairness.} A similar trend is observed when the initial classifier is more biased, though the gap between RASC and random selection is higher due to the ease of auditing a more biased linear classifier. Assuming a distribution with non-trivial variance along all dimensions (like most datasets), a company with limited expressivity in its hypothesis class will have progressively more difficulty in deviating as $\Delta(h^\star)$ increases.

\textbf{Effect of Tolerance $\alpha$.} We examine the impact of varying tolerance \( \alpha \) on \textsf{RASC} with fixed budget ($25$ samples). As shown in Figure~\ref{fig:combined_alpha_plots_compas} (Left), the company's ability to deviate worsens as \( \alpha \) increases, plateauing beyond a threshold. 

\noindent\textit{Effect of Initial Classifier Bias.} As visible in Figure \ref{fig:combined_alpha_plots_compas} (Right), \( \alpha \)'s influence is more pronounced for a highly biased classifier. This is because empirically, for the Logistic Regression class, the auditor is better able to constrain the company's deviation with fairer audit sets. For higher $\Delta(h^\star)$, there is an increased scope of getting fairer audit sets with increasing $\alpha$.

\textbf{Effect of Initial Classifier Fairness (\( \Delta(h^\star) \)) on post-audit DP.} In this experiment, we fix the tolerance parameter $\alpha$ at $0.05$ and the audit budget at $25$ samples. We evaluate the effect of increasing \( \Delta(h^\star) \) on the DP violation of the post-audit model. Specifically, we compare the DP violation of the most biased classifier consistent with the audit sets produced by \textsf{RASC} and random selection. We also report the theoretical bounds corresponding to the optimal post-audit deviation achievable under the Budgeted and \( 0.05 \)-Tolerant Budgeted audits. 

As also observed for the Student Performance Dataset,  Figure~\ref{fig:combined_plots_compas} (Left) shows that the empirical DP violation of the company’s post-audit model increases with higher initial values of \( \Delta(h^\star) \). 

\noindent\textit{Comparison with Theoretical Behavior.} Similar to the Student Performance dataset, under the theoretical setting, the DP violation of the company’s post-audit model remains nearly constant across varying $\Delta(h^\star)$, suggesting a powerful company would be indifferent to initial fairness, while also achieving its objective of deviation almost perfectly due to the significantly lower size of the budget in comparison to the size of the whole dataset. 

\pgfplotscreateplotcyclelist{mycolorlist}{%
    {blue, mark=*},
    {red, mark=square*},
    {green!50!black, mark=triangle*},
    {purple!70!blue, mark=diamond*},
    {orange, mark=otimes*},
    {brown!60!black, mark=oplus*},
    {teal, mark=pentagon*},
    {magenta!60!black, mark=x}
}

\begin{figure*}[!t]
    \centering

    \begin{minipage}[b]{0.48\textwidth}
        \centering
        \begin{tikzpicture}
            \begin{axis}[
                title={$\Delta(h^\star) = 0.16$},
                xlabel={Budget},
                ylabel={$\Delta(h') - \Delta(h^\star)$},
                grid=minor,
                width = 5.5cm,
                cycle list name=mycolorlist,
            ]
            \addplot table[x index=0, y index=1, col sep = space]{varying_budget_fair_compas.txt};
            \addplot table[x index=0, y index=2, col sep = space]{varying_budget_fair_compas.txt};
            \addplot table[x index=0, y index=3, col sep = space]{varying_budget_fair_compas.txt};
            \addplot table[x index=0, y index=4, col sep = space]{varying_budget_fair_compas.txt};
            \addplot table[x index=0, y index=5, col sep = space]{varying_budget_fair_compas.txt};
            \addplot table[x index=0, y index=6, col sep = space]{varying_budget_fair_compas.txt};
            \addplot table[x index=0, y index=7, col sep = space]{varying_budget_fair_compas.txt};
            \addplot table[x index=0, y index=8, col sep = space]{varying_budget_fair_compas.txt};
            \end{axis}
        \end{tikzpicture}
    \end{minipage}
    \hfill
    \begin{minipage}[b]{0.48\textwidth}
        \centering
        \begin{tikzpicture}
            \begin{axis}[
                title={$\Delta(h^\star) = 0.50$},
                xlabel={Budget},
                ylabel={$\Delta(h') - \Delta(h^\star)$},
                grid=minor,
                width = 5.5cm,
                cycle list name=mycolorlist,
            ]
            \addplot table[x index=0, y index=1, col sep = space]{varying_budget_biased_compas.txt};
            \addplot table[x index=0, y index=2, col sep = space]{varying_budget_biased_compas.txt};
            \addplot table[x index=0, y index=3, col sep = space]{varying_budget_biased_compas.txt};
            \addplot table[x index=0, y index=4, col sep = space]{varying_budget_biased_compas.txt};
            \addplot table[x index=0, y index=5, col sep = space]{varying_budget_biased_compas.txt};
            \addplot table[x index=0, y index=6, col sep = space]{varying_budget_biased_compas.txt};
            \addplot table[x index=0, y index=7, col sep = space]{varying_budget_biased_compas.txt};
            \addplot table[x index=0, y index=8, col sep = space]{varying_budget_biased_compas.txt};
            \end{axis}
        \end{tikzpicture}
    \end{minipage}
    \begin{tikzpicture}
        \begin{axis}[
            hide axis,
            width=5cm,
            height=4cm,
            legend columns=2,
            legend style={
                draw=none,
                font=\footnotesize,
                row sep=-3pt,
                column sep=10pt
            },
            cycle list name=mycolorlist
        ]
        \addplot coordinates {(0,0)}; \addlegendentry{\textsf{RASC} with $\alpha = 0.05$}
        \addplot coordinates {(0,0)}; \addlegendentry{\textsf{RASC} with $\alpha = 0.1$}
        \addplot coordinates {(0,0)}; \addlegendentry{\textsf{RASC} with $\alpha = 0.2$}
        \addplot coordinates {(0,0)}; \addlegendentry{Random Selection}
        \addplot coordinates {(0,0)}; \addlegendentry{Budgeted Auditor}
        \addplot coordinates {(0,0)}; \addlegendentry{0.1-Tolerant Budgeted Auditor}
        \addplot coordinates {(0,0)}; \addlegendentry{0.2-Tolerant Budgeted Auditor}
        \addplot coordinates {(0,0)}; \addlegendentry{0.3-Tolerant Budgeted Auditor}
        \end{axis}
    \end{tikzpicture}

        \caption{Post-audit deviation with varying budget for $\Delta(h^\star) = 0.16$ (Left) and $\Delta(h^\star) = 0.50$ (Right).}
    \label{fig:combined_budget_plots_compas}
\end{figure*}

\pgfplotscreateplotcyclelist{mycolorlist}{
    {blue, mark=*},
    {red, mark=square*},
    {green!60!black, mark=triangle*},
    {orange, mark=diamond*},
}

\begin{figure*}
    
    \centering
    \begin{minipage}[b]{0.48\textwidth}
        \centering
        \begin{tikzpicture}
            \begin{axis}[
                title={$\Delta(h^\star) = 0.26$},
                xlabel={$\alpha$},
                ylabel={$\Delta(h') - \Delta(h^\star)$},
                grid=minor,
                width = 5.5cm,
                cycle list name=mycolorlist,
            ]
            \addplot table[x index=0, y index=1]{varying_alpha_fair_compas.txt}; 
            \addplot table[x index=0, y index=2]{varying_alpha_fair_compas.txt}; 
            \addplot table[x index=0, y index=3]{varying_alpha_fair_compas.txt}; 
            \addplot table[x index=0, y index=4]{varying_alpha_fair_compas.txt};
            \end{axis}
        \end{tikzpicture}
    \end{minipage}
    \hfill
    \begin{minipage}[b]{0.48\textwidth}
        \centering
        \begin{tikzpicture}
            \begin{axis}[
                title={$\Delta(h^\star) = 0.69$},
                xlabel={$\alpha$},
                ylabel={$\Delta(h') - \Delta(h^\star)$},
                grid=minor,
                width = 5.5cm,
                cycle list name=mycolorlist,
            ]
            \addplot table[x index=0, y index=1]{varying_alpha_biased_compas.txt};
            \addplot table[x index=0, y index=2]{varying_alpha_biased_compas.txt};
            \addplot table[x index=0, y index=3]{varying_alpha_biased_compas.txt};
            \addplot table[x index=0, y index=4]{varying_alpha_biased_compas.txt};
            \end{axis}
        \end{tikzpicture}
    \end{minipage}

    \begin{tikzpicture}
        \begin{axis}[
            hide axis,
            width=12cm,
            height=2cm,
            legend columns=3,
            legend style={
                draw=none,
                font=\small,
                row sep=-3pt,
                column sep=10pt
            },
            cycle list name=mycolorlist
        ]
        \addplot coordinates {(0,0)}; \addlegendentry{\textsf{RASC}}
        \addplot coordinates {(0,0)}; \addlegendentry{Random Selection}
        \addplot coordinates {(0,0)}; \addlegendentry{Budgeted Auditor}
        \addplot coordinates {(0,0)}; \addlegendentry{$\alpha$-Tolerant Budgeted Auditor}
        \end{axis}
    \end{tikzpicture}

    \caption{Post-audit deviation with varying $\alpha$ for  $\Delta(h^\star) = 0.26$ (Left) and $\Delta(h^\star) = 0.69$ (Right).}
    \label{fig:combined_alpha_plots_compas}
\end{figure*}

\pgfplotscreateplotcyclelist{mycolorlist}{
    {blue, mark=*},
    {red, mark=square*},
    {green!60!black, mark=triangle*},
    {orange, mark=diamond*},
}

\begin{figure*}
    \centering

    \centering
    \begin{tikzpicture}
        \begin{axis}[
            xlabel={$\Delta(h^\star)$},
            ylabel={$\Delta(h')$},
            grid=minor,
            width = 5.5cm,
            cycle list name=mycolorlist,
        ]
        \addplot table[x index=0, y index=1]{varying_dp_compas.txt}; 
        \addplot table[x index=0, y index=2]{varying_dp_compas.txt}; 
        \addplot table[x index=0, y index=3]{varying_dp_compas.txt}; 
        \addplot table[x index=0, y index=4]{varying_dp_compas.txt}; 
        \end{axis}
    \end{tikzpicture}
   
    \begin{tikzpicture}
        \begin{axis}[
            hide axis,
            width=12cm,
            height=2cm,
            legend columns=3,
            legend style={
                draw=none,
                font=\small,
                row sep=-3pt,
                column sep=10pt
            },
            cycle list name=mycolorlist
        ]
        \addplot coordinates {(0,0)}; \addlegendentry{\textsf{RASC}}
        \addplot coordinates {(0,0)}; \addlegendentry{Random Selection}
        \addplot coordinates {(0,0)}; \addlegendentry{Budgeted Auditor}
        \addplot coordinates {(0,0)}; \addlegendentry{$\alpha$-Tolerant Budgeted Auditor}
        \end{axis}
    \end{tikzpicture}

    \caption{Post-audit DP violation with varying values of $\Delta(h^\star)$}
    \label{fig:combined_plots_compas}
\end{figure*}

\clearpage

\subsection{German Credit Risk Dataset}

\pgfplotscreateplotcyclelist{mycolorlist}{%
    {blue, mark=*},
    {red, mark=square*},
    {green!50!black, mark=triangle*},
    {purple!70!blue, mark=diamond*},
    {orange, mark=otimes*},
    {brown!60!black, mark=oplus*},
    {teal, mark=pentagon*},
    {magenta!60!black, mark=x}
}

\begin{figure*}[h!]
    \centering

    \begin{minipage}[b]{0.48\textwidth}
        \centering
        \begin{tikzpicture}
            \begin{axis}[
                title={$\Delta(h^\star) = 0.20$},
                xlabel={Budget},
                ylabel={$\Delta(h') - \Delta(h^\star)$},
                grid=minor,
                width = 5.5cm,
                cycle list name=mycolorlist,
            ]
            \addplot table[x index=0, y index=1, col sep = space]{varying_budget_fair_german.txt};
            \addplot table[x index=0, y index=2, col sep = space]{varying_budget_fair_german.txt};
            \addplot table[x index=0, y index=3, col sep = space]{varying_budget_fair_german.txt};
            \addplot table[x index=0, y index=4, col sep = space]{varying_budget_fair_german.txt};
            \addplot table[x index=0, y index=5, col sep = space]{varying_budget_fair_german.txt};
            \addplot table[x index=0, y index=6, col sep = space]{varying_budget_fair_german.txt};
            \addplot table[x index=0, y index=7, col sep = space]{varying_budget_fair_german.txt};
            \addplot table[x index=0, y index=8, col sep = space]{varying_budget_fair_german.txt};
            \end{axis}
        \end{tikzpicture}
    \end{minipage}
    \hfill
    \begin{minipage}[b]{0.48\textwidth}
        \centering
        \begin{tikzpicture}
            \begin{axis}[
                title={$\Delta(h^\star) = 0.50$},
                xlabel={Budget},
                ylabel={$\Delta(h') - \Delta(h^\star)$},
                grid=minor,
                width = 5.5cm,
                cycle list name=mycolorlist,
            ]
            \addplot table[x index=0, y index=1, col sep = space]{varying_budget_biased_german.txt};
            \addplot table[x index=0, y index=2, col sep = space]{varying_budget_biased_german.txt};
            \addplot table[x index=0, y index=3, col sep = space]{varying_budget_biased_german.txt};
            \addplot table[x index=0, y index=4, col sep = space]{varying_budget_biased_german.txt};
            \addplot table[x index=0, y index=5, col sep = space]{varying_budget_biased_german.txt};
            \addplot table[x index=0, y index=6, col sep = space]{varying_budget_biased_german.txt};
            \addplot table[x index=0, y index=7, col sep = space]{varying_budget_biased_german.txt};
            \addplot table[x index=0, y index=8, col sep = space]{varying_budget_biased_german.txt};
            \end{axis}
        \end{tikzpicture}
    \end{minipage}
    \begin{tikzpicture}
        \begin{axis}[
            hide axis,
            width=5cm,
            height=4cm,
            legend columns=2,
            legend style={
                draw=none,
                font=\footnotesize,
                row sep=-3pt,
                column sep=10pt
            },
            cycle list name=mycolorlist
        ]
        \addplot coordinates {(0,0)}; \addlegendentry{\textsf{RASC} with $\alpha = 0.05$}
        \addplot coordinates {(0,0)}; \addlegendentry{\textsf{RASC} with $\alpha = 0.1$}
        \addplot coordinates {(0,0)}; \addlegendentry{\textsf{RASC} with $\alpha = 0.2$}
        \addplot coordinates {(0,0)}; \addlegendentry{Random Selection}
        \addplot coordinates {(0,0)}; \addlegendentry{Budgeted Auditor}
        \addplot coordinates {(0,0)}; \addlegendentry{0.1-Tolerant Budgeted Auditor}
        \addplot coordinates {(0,0)}; \addlegendentry{0.2-Tolerant Budgeted Auditor}
        \addplot coordinates {(0,0)}; \addlegendentry{0.3-Tolerant Budgeted Auditor}
        \end{axis}
    \end{tikzpicture}

        \caption{Post-audit deviation with varying budget for $\Delta(h^\star) = 0.20$ (Left) and $\Delta(h^\star) = 0.50$ (Right).}
    \label{fig:combined_budget_plots_german}
\end{figure*}

\pgfplotscreateplotcyclelist{mycolorlist}{
    {blue, mark=*},
    {red, mark=square*},
    {green!60!black, mark=triangle*},
    {orange, mark=diamond*},
}

\begin{figure*}[h!]
    \centering
    \begin{minipage}[b]{0.48\textwidth}
        \centering
        \begin{tikzpicture}
            \begin{axis}[
                title={$\Delta(h^\star) = 0.28$},
                xlabel={$\alpha$},
                ylabel={$\Delta(h') - \Delta(h^\star)$},
                grid=minor,
                width = 5.5cm,
                cycle list name=mycolorlist,
            ]
            \addplot table[x index=0, y index=1]{varying_alpha_fair_german.txt}; 
            \addplot table[x index=0, y index=2]{varying_alpha_fair_german.txt}; 
            \addplot table[x index=0, y index=3]{varying_alpha_fair_german.txt}; 
            \addplot table[x index=0, y index=4]{varying_alpha_fair_german.txt};
            \end{axis}
        \end{tikzpicture}
    \end{minipage}
    \hfill
    \begin{minipage}[b]{0.48\textwidth}
        \centering
        \begin{tikzpicture}
            \begin{axis}[
                title={$\Delta(h^\star) = 0.83$},
                xlabel={$\alpha$},
                ylabel={$\Delta(h') - \Delta(h^\star)$},
                grid=minor,
                width = 5.5cm,
                cycle list name=mycolorlist,
            ]
            \addplot table[x index=0, y index=1]{varying_alpha_biased_german.txt};
            \addplot table[x index=0, y index=2]{varying_alpha_biased_german.txt};
            \addplot table[x index=0, y index=3]{varying_alpha_biased_german.txt};
            \addplot table[x index=0, y index=4]{varying_alpha_biased_german.txt};
            \end{axis}
        \end{tikzpicture}
    \end{minipage}

    \begin{tikzpicture}
        \begin{axis}[
            hide axis,
            width=12cm,
            height=2cm,
            legend columns=3,
            legend style={
                draw=none,
                font=\small,
                row sep=-3pt,
                column sep=10pt
            },
            cycle list name=mycolorlist
        ]
        \addplot coordinates {(0,0)}; \addlegendentry{\textsf{RASC}}
        \addplot coordinates {(0,0)}; \addlegendentry{Random Selection}
        \addplot coordinates {(0,0)}; \addlegendentry{Budgeted Auditor}
        \addplot coordinates {(0,0)}; \addlegendentry{$\alpha$-Tolerant Budgeted Auditor}
        \end{axis}
    \end{tikzpicture}

    \caption{Post-audit deviation with varying $\alpha$ for  $\Delta(h^\star) = 0.28$ (Left) and $\Delta(h^\star) = 0.83$ (Right).}
    \label{fig:combined_alpha_plots_german}
\end{figure*}

\pgfplotscreateplotcyclelist{mycolorlist}{
    {blue, mark=*},
    {red, mark=square*},
    {green!60!black, mark=triangle*},
    {orange, mark=diamond*},
}

\begin{figure*}[h!]
    \centering

        \centering
        \begin{tikzpicture}
            \begin{axis}[
                xlabel={$\Delta(h^\star)$},
                ylabel={$\Delta(h')$},
                grid=minor,
                width = 5.5cm,
                cycle list name=mycolorlist,
            ]
            \addplot table[x index=0, y index=1]{varying_dp_german.txt}; 
            \addplot table[x index=0, y index=2]{varying_dp_german.txt}; 
            \addplot table[x index=0, y index=3]{varying_dp_german.txt}; 
            \addplot table[x index=0, y index=4]{varying_dp_german.txt}; 
            \end{axis}
        \end{tikzpicture}

    \begin{tikzpicture}
        \begin{axis}[
            hide axis,
            width=12cm,
            height=2cm,
            legend columns=3,
            legend style={
                draw=none,
                font=\small,
                row sep=-3pt,
                column sep=10pt
            },
            cycle list name=mycolorlist
        ]
        \addplot coordinates {(0,0)}; \addlegendentry{\textsf{RASC}}
        \addplot coordinates {(0,0)}; \addlegendentry{Random Selection}
        \addplot coordinates {(0,0)}; \addlegendentry{Budgeted Auditor}
        \addplot coordinates {(0,0)}; \addlegendentry{$\alpha$-Tolerant Budgeted Auditor}
        \end{axis}
    \end{tikzpicture}

    \caption{Post-audit DP violation with varying values of $\Delta(h^\star)$} 
    \label{fig:combined_plots_german}
\end{figure*}

\end{document}